
\documentclass[10pt,twocolumn,letterpaper]{article}

\usepackage{iccv}              

%
%


\usepackage{graphicx}
\usepackage{booktabs}

\usepackage{amsthm}
\usepackage{amsmath,amsfonts,bm}

\newtheorem{definition}{Definition}

\usepackage{thm-restate}
\usepackage[linesnumbered,ruled,vlined]{algorithm2e}

\usepackage{subcaption}
\usepackage{caption}

\usepackage{multirow}
\usepackage{colortbl}

\usepackage{enumitem}

\definecolor{iccvblue}{rgb}{0.21,0.49,0.74}
\usepackage[pagebackref,breaklinks,colorlinks,allcolors=iccvblue]{hyperref}

\usepackage{amsmath,amsfonts,bm}









\def\eqref#1{equation~\ref{#1}}









\def\1{\bm{1}}










\DeclareMathAlphabet{\mathsfit}{\encodingdefault}{\sfdefault}{m}{sl}
\SetMathAlphabet{\mathsfit}{bold}{\encodingdefault}{\sfdefault}{bx}{n}













\DeclareMathOperator*{\argmin}{arg\,min}


\title{Resolving Token-Space Gradient Conflicts: Token Space Manipulation for Transformer-Based Multi-Task Learning}

\author{Wooseong Jeong\\
KAIST\\
{\tt\small stk14570@kaist.ac.kr}
\and
Kuk-Jin Yoon\\
KAIST\\
{\tt\small kjyoon@kaist.ac.kr}\\
}

\begin{document}
\maketitle

\renewcommand{\thefootnote}{\fnsymbol{footnote}}
\footnotetext[1]{Our source code is available at: \url{https://github.com/wooseong97/DTME-MTL}}

\begin{abstract}
Multi-Task Learning (MTL) enables multiple tasks to be learned within a shared network, but differences in objectives across tasks can cause negative transfer, where the learning of one task degrades another task's performance. While pre-trained transformers significantly improve MTL performance, their fixed network capacity and rigid structure limit adaptability. Previous dynamic network architectures attempt to address this but are inefficient as they directly convert shared parameters into task-specific ones. We propose Dynamic Token Modulation and Expansion (DTME-MTL), a framework applicable to any transformer-based MTL architecture. DTME-MTL enhances adaptability and reduces overfitting by identifying gradient conflicts in token space and applying adaptive solutions based on conflict type. Unlike prior methods that mitigate negative transfer by duplicating network parameters, DTME-MTL operates entirely in token space, enabling efficient adaptation without excessive parameter growth. Extensive experiments demonstrate that DTME-MTL consistently improves multi-task performance with minimal computational overhead, offering a scalable and effective solution for enhancing transformer-based MTL models.
\end{abstract}    
\section{Introduction}
Multi-Task Learning (MTL) enables multiple tasks to be learned simultaneously within a shared network, improving generalization, efficiency, and convergence speed compared to training separate models \citep{caruana1997multitask}. However, conflicting objectives among tasks can lead to \textit{negative transfer}, where learning one task degrades the performance of another \citep{crawshaw2020multi}. The key challenge lies in designing architectures that effectively balance shared and task-specific representations to mitigate negative transfer.

Transformer-based MTL architectures \citep{xu2023multi, xu2023demt, ye2022taskprompter} leverage the strong generalization capabilities of large-scale pre-trained networks such as Vision Transformers (ViTs) \citep{vit}. By utilizing pre-trained transformers trained on large open-source datasets, these architectures demonstrate improved generalization compared to conventional CNN-based MTL methods \citep{RN9, RN29, RN32, pap, RN51, RN52, RN49, RN50}. However, they typically rely on predefined modules, such as Task Prompter mechanisms \citep{xu2023multi, xu2023demt, ye2022taskprompter} and Mixture of Experts (MoE) \citep{riquelme2021scaling, zhang2022mixture, fan2022m3vit, mustafa2022multimodal, chen2023mod}, to separate shared and task-specific components. These rigid structures struggle to adapt to dynamic task relationships, leading to inefficient information sharing and suboptimal performance. The degree of task specialization required varies across different network depths \citep{dwivedi2019representation}: high-level tasks such as semantic segmentation demand greater capacity in deeper layers, while low-level tasks like surface normal estimation rely more on shallower layers. Ideally, MTL architectures should dynamically adjust the allocation of shared and task-specific representations to accommodate these variations. However, existing transformer-based MTL frameworks are inherently constrained by their fixed network capacity, limiting their ability to adapt to evolving task dependencies and effectively mitigate negative transfer.

A straightforward approach to addressing these limitations is to increase the size of the transformer backbone. While this expands network capacity, it has a major drawback: it prevents the use of open-source pre-trained networks, which provide strong initialization and generalization capabilities across multiple tasks. Training a larger network from scratch requires massive computational resources and large-scale datasets, making this approach impractical for many applications. Instead, an effective MTL framework should refine existing architectures to retain the advantages of pre-trained transformers while improving adaptability to task-specific needs.

To achieve this, efficient adaptation methods for pre-trained transformer-based MTL architectures are needed. Unlike approaches that build MTL frameworks from scratch or rely on task optimization within a fixed network capacity, we focus on adapting and enhancing predefined architectures while preserving their core design. This allows existing MTL models to be improved efficiently while dynamically adjusting task-specific representations. Despite its potential, the challenge of how to adaptively expand existing multi-task networks remains an underexplored problem.

One possible approach for adapting models during fine-tuning is the use of multi-task optimization techniques \citep{RN23, RN26, RN25, liu2024famo, RN19, RN36, RN20, RN18, liu2021towards, navon2022multi, senushkin2023independent}, which mitigate negative transfer by adjusting task loss weights or modifying gradients. While these methods help balance task performance, they remain limited by a fixed network design and cannot expand model capacity.

A more direct approach involves dynamic network architectures, such as Recon \citep{guangyuan2022recon}, which directly expand network capacity to mitigate negative transfer. Recon measures conflicting gradients \citep{RN20} in each layer—where gradients from different tasks point in opposing directions—and transforms the most conflicting layers into task-specific ones. While this increases flexibility by expanding the capacity of predefined architectures, directly converting shared parameters into task-specific ones in transformers leads to parameter inefficiency, excessive computational overhead, and a higher risk of overfitting. Consequently, its scalability to large transformer-based architectures is limited. 

To address these challenges, we propose \textbf{Dynamic Token Modulation and Expansion (DTME-MTL)}, a novel framework designed to improve pre-trained transformer-based MTL architectures. Unlike previous methods that directly manipulate network parameters, our approach mitigates negative transfer by restructuring the \textit{token space} of multi-task networks. We treat transformer tokens as learnable parameters and analyze their structure using \textit{singular value decomposition} (SVD) to identify gradient conflicts in token space. These conflicts are categorized into two types: \textit{range space conflicts}, which are addressed through modulation via affine transformation of existing tokens, and \textit{null space conflicts}, which are resolved by introducing new task-specific tokens through expansion.

In our experiments, we demonstrate that DTME-MTL effectively enhances multi-task performance with minimal parameter overhead. Additionally, our results highlight that resolving task conflicts in the token space improves adaptability while mitigating overfitting.

Our main contributions are summarized as follows:
\begin{itemize}[leftmargin=*]
\item We propose DTME-MTL which dynamically modulates and expands token spaces to mitigate negative transfer in transformer-based multi-task architectures.
\item We introduce a structured approach to resolving gradient conflicts in token space by categorizing them into range and null space conflicts, demonstrating how this improves multi-task performance.
\item DTME-MTL is an \textit{off-the-shelf} solution that seamlessly integrates with existing state-of-the-art transformer-based MTL architectures, enhancing performance with minimal computational overhead.
\end{itemize}
\section{Related Works}
\noindent\textbf{Multi-Task Learning in Vision Transformers.} Originally designed for NLP tasks, transformers have outperformed existing CNN models in various computer vision tasks. Attempts have been made to incorporate Vision Transformer \citep{vit, swin, pvt, focal, segformer, crossformer} in MTL. MTFormer \citep{mtformer} employs a shared transformer encoder and decoder with a cross-task attention mechanism. MulT \citep{mult} utilizes a shared attention mechanism to model task dependencies based on the Swin transformer. InvPT \citep{invpt} focuses on global spatial position and multi-task context for dense prediction tasks through multi-scale feature aggregation. Mixture of Experts (MoE), inspired by the NLP domain, divides the model into predefined expert groups, adaptively shared or devoted to specific tasks during the learning phase \citep{riquelme2021scaling, zhang2022mixture, fan2022m3vit, mustafa2022multimodal, chen2023mod, huang2024going}. Task prompter \citep{xu2023multi, xu2023demt, ye2022taskprompter} uses task-specific tokens to encapsulate task-specific information and employs cross-task interactions to enhance multi-task performance.

\vspace{2pt}
\noindent\textbf{Dynamic Network Architectures for MTL.}  Dynamic networks adapt their structure during training to improve efficiency and task specialization. Several methods have explored dynamic architectures for MTL. Channel-wise dynamic allocation \citep{bragman2019stochastic} assigns different convolutional channels to different tasks, but this method is not directly applicable to transformer-based architectures. Neural Architecture Search (NAS) for MTL \citep{liang2018evolutionary, pasunuru2019continual, guo2020learning, gao2020mtl, bruggemann2020automated, silvestri2020one, cai2021multi} explores optimal network configurations but is computationally expensive and incompatible with large pre-trained backbone models such as ViTs \citep{vit}. Recon \citep{guangyuan2022recon} transforms shared parameters directly into task-specific ones to handle conflicting gradients. Unlike most dynamic network architectures, our approach focuses on a dynamic system that can be directly applied to transformer-based multi-task architectures, leveraging pre-trained backbones while maintaining a reasonable computational cost.

\begin{figure*}[t]
    \includegraphics[width=0.95\linewidth]{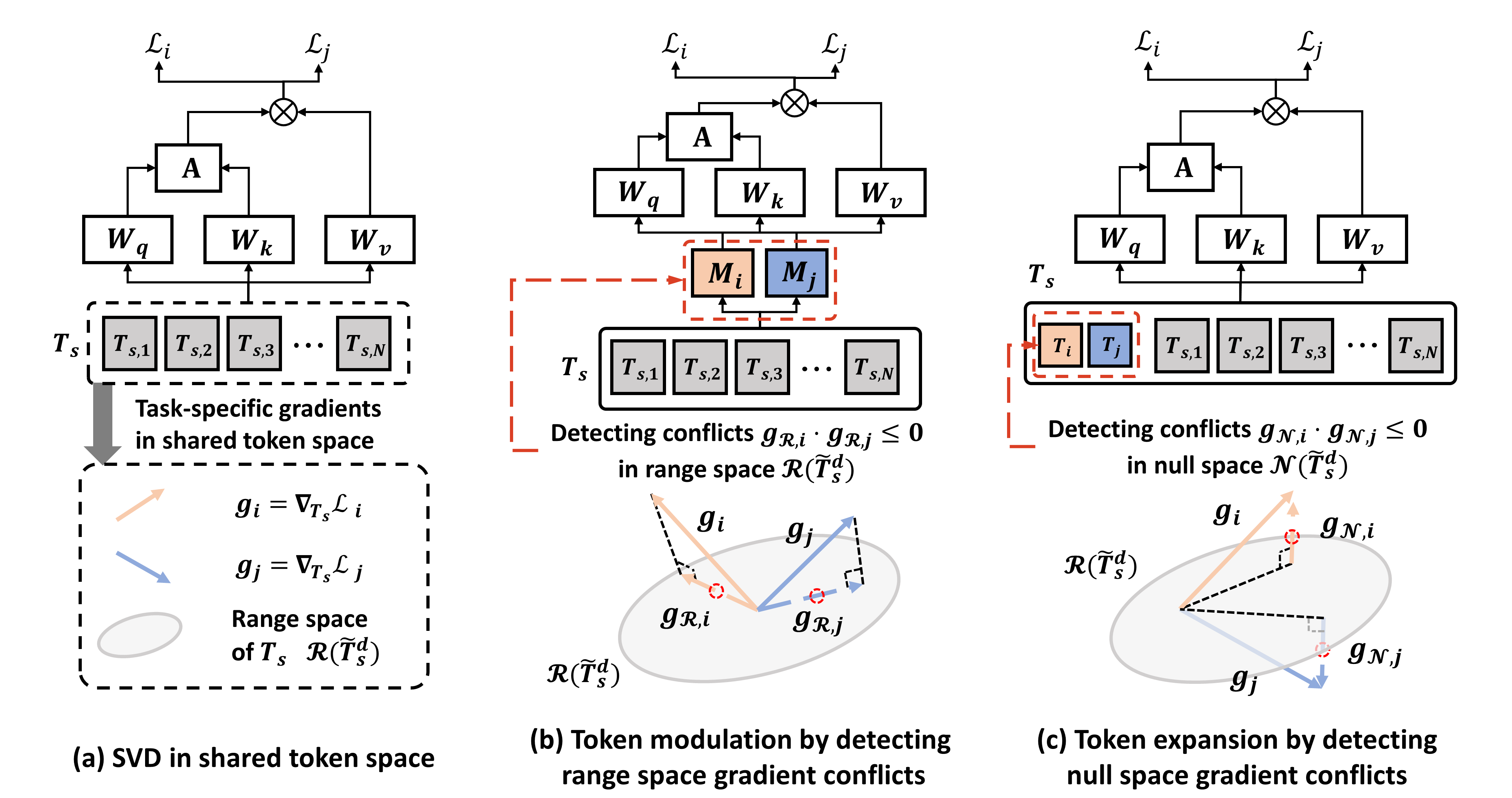}
    \caption{Framework overview of the proposed DTME-MTL. (a) At each network layer, we compute the input token's range space $\mathcal{R}(\widetilde{\mathcal{T}}_s^{d})$ and their task-specific gradients, determining principal vectors from the uncentered covariance of $\mathcal{T}_s$. (b) In cases where task-specific gradients conflict in the range space of $\widetilde{\mathcal{T}}_s^{d}$ (e.g. $g_{\mathcal{R},i} \cdot g_{\mathcal{R},j} \leq 0$), modulation is applied to $\mathcal{T}_s$ by introducing $\mathcal{M}_i$ and $\mathcal{M}_j$. (c) When task-specific gradients conflict within the null space of $\widetilde{\mathcal{T}}_s^{d}$ (e.g. $g_{\mathcal{N},i} \cdot g_{\mathcal{N},j} \leq 0$), 
    task-specific tokens $\mathcal{T}_i$ and $\mathcal{T}_j$ are added.}
    \label{fig:overview}
    \vspace{-2pt}
\end{figure*}

\vspace{2pt}
\noindent\textbf{Multi-Task Optimization.} Optimizing the MTL aims to address negative transfer by adjusting the relative weighting of task losses or directly manipulating gradients. Task-dependent uncertainty \citep{RN23} is utilized to weigh the loss of multiple tasks. \citet{RN26} considers the rate of loss descent, while \citep{RN25} prioritizes tasks based on difficulty. Recently, \citet{liu2024famo} proposed updating task weights based on the loss history. In contrast, approaches like \citep{RN19, RN36, RN20, RN18, liu2021towards, navon2022multi, senushkin2023independent} directly modify task gradients to achieve the desired balance. PCGrad \citep{RN20} analyzes negative transfer by identifying conflicting gradients in the shared parameters of the network. \citet{jiang2024forkmerge} suggests a positive link between negative transfer and conflicting gradients in auxiliary task learning. However, the conventional view in MTL considers conflicting gradients a key factor contributing to negative transfer in joint multi-task learning optimization \citep{RN19, RN36, RN20, RN18, liu2021towards, navon2022multi, senushkin2023independent, jeong2024quantifying, jeong2025selective}, where tasks are learned together rather than serving as auxiliary tasks. Therefore, we adopt a similar perspective. Normalized gradients are employed to prevent spillover between tasks \citep{RN24}, whereas \citet{RN21} introduce stochasticity to the network's parameters based on the consistency in the sign of gradients. RotoGrad \citep{RN22} rotates the feature space of the network to narrow the gap between tasks.

\section{Preliminaries}
\label{preliminaries}
In multi-task learning, the network learns a set of tasks $\{\tau_i\}_{i=1}^{\mathcal{K}}$ jointly, where $\mathcal{K}$ is the number of tasks. Each task $\tau_i$ has its own loss function $\mathcal{L}_{i}$. The network parameter $\Theta$ can be classified into $\Theta = \{\Theta_{s}, \Theta_{1} ,...,\Theta_{\mathcal{K}}\}$ where $\Theta_{s}$ is shared parameter across all tasks and $\Theta_i$ is task-specific parameters devoted to task $\tau_i$. Then, the objective function of multi-task learning is to minimize the weighted sum of all tasks' losses: $\Theta^* = \argmin_{\Theta}\sum^{\mathcal{K}}_{i=1} w_i\mathcal{L}_i(\Theta_s, \Theta_i)$ where $w_i$ represents the scale of the task-specific loss $\mathcal{L}_i$. A phenomenon called conflicting gradients \cite{RN20}, where the gradients of each objective point in different directions, has been identified as a main cause of negative transfer.
\begin{definition}[Conflicting gradients]
Define $g_i$ as the gradient of task $\tau_i$ with respect to the shared parameters $\Theta_s$ as $g_i = \nabla_{\Theta_s}\mathcal{L}_i(\Theta_s, \Theta_i)$. Let $g_i$ and $g_j$ represent the gradients for a pair of tasks $\tau_i$ and $\tau_j$ where $i \neq j$. If $g_i \cdot g_j \leq 0$, these two gradients are termed conflicting gradients.
\label{def:conflicting_gradients}
\end{definition}
However, the role of conflicting gradients remains a topic of debate. While conventional MTL optimization studies \cite{RN19, RN36, RN20, RN18, liu2021towards, navon2022multi, senushkin2023independent, jeong2024quantifying} consider conflicting gradients as a main cause of negative transfer, \citet{jiang2024forkmerge} argue that they can serve as a form of regularization that improves generalization when present in network parameters. Our findings align with \citet{jiang2024forkmerge} in that directly resolving conflicting gradients by converting shared parameters into task-specific ones \cite{guangyuan2022recon} leads to overfitting when applied to transformers. In contrast, we propose a token-based network expansion approach that categorizes gradient conflicts within token space and adapts accordingly, mitigating negative transfer while maintaining generalization.
\begin{figure*}[t]
\centering
    \includegraphics[width=0.85\linewidth]{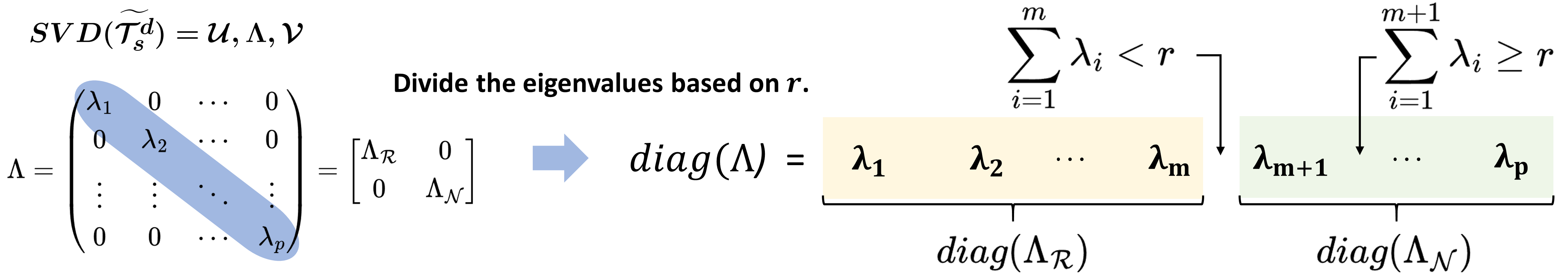}
    \caption{The process approximates the range and null spaces of $\Tilde{\mathcal{T}}_s^d$ based on the proportion of total variance, $r$. These eigenvalues are arranged in descending order, satisfying $\lambda_i \geq \lambda_j$ if $i < j$. If $r$ is greater than the sum up to $\lambda_m$ and smaller than the sum up to $\lambda_{m+1}$, then we select the set $\{\lambda_i\}_{i=1}^{m}$ as $\Lambda_{\mathcal{R}}$, and the remaining set $\{\lambda_i\}_{i=m+1}^{p}$ as $\Lambda_{\mathcal{N}}$.}
\label{fig:variance_proportion}
\vspace{-2pt}
\end{figure*}

\section{Method}
In order to mitigate negative transfer by ensuring sufficient space for tasks, we adopt token-based network expansion. Initially, we define the token space as the output of each layer in the transformer block through singular value decomposition (SVD). Subsequently, we categorize conflicts in task-specific gradients into two types: conflicts in the range space of tokens and conflicts in the null space of tokens. Finally, based on the type of conflict, we introduce efficient token modulation and expansion techniques for transformer-based multi-task architectures.

\subsection{Defining Token Space using SVD}
\label{subsec:token_space}
In this section, we create a vector space consisting of shared tokens in a transformer, aiming to classify the types of conflicting gradients. More specifically, we approximate the range space and null space of the uncentered covariance of the tokens before applying our methods.

Let's consider a dataset $\{\mathcal{X}_l, \mathcal{Y}_l\}_{l=1}^{n}$, where $\mathcal{X}_l$ represents the input, $\mathcal{Y}_l$ denotes the label, and $n$ is the number of samples. Denote input shared token for a layer $d$ as $\mathcal{T}_s^{l,d} = [\mathcal{T}_{s,1}^{l,d}, \mathcal{T}_{s,2}^{l,d}, ..., \mathcal{T}_{s,N}^{l,d}]$ where $N$ is the total number of shared tokens in that layer. Every token $\mathcal{T}_{s,k}^{l,d} \in \mathbb{R}^{p}$ represents the output of the transformer layer $d-1$ for the corresponding input data $\mathcal{X}_l$, where $p$ is the hidden dimension of the token embedding. Let's consider a total of $D$ transformer layers. Next, the uncentered covariance of the token in layer $d$ (where $1\leq d\leq D$) is as follows:

\begin{align}
    \widetilde{\mathcal{T}}_s^{d} = \frac{1}{n}\sum_{l=1}^{n}(\mathcal{T}_s^{l,d})(\mathcal{T}_s^{l,d})^T
\end{align}
$\widetilde{\mathcal{T}}_s^d$ is a square matrix of dimensions $p \times p$. To define the token space, we apply SVD to $\widetilde{\mathcal{T}}_s^{d}$. Following this, we can divide vector space formed by $\widetilde{\mathcal{T}}_s^{d}$ into its range space $\mathcal{R}(\widetilde{\mathcal{T}}_s^{d})$ and null space $\mathcal{N}(\widetilde{\mathcal{T}}_s^{d})$ depending on the magnitude of eigenvalue $\Lambda$. The process is illustrated below:
\begin{align}
    \widetilde{\mathcal{T}}_s^d = \mathcal{U} \Lambda \mathcal{V}^T, \hspace{10pt}
     \Lambda =
        \begin{bmatrix}
        \Lambda_{\mathcal{R}} & 0 \\
        0 & \Lambda_{\mathcal{N}} 
        \end{bmatrix}
\label{eq:define_space}
\end{align}
where $\Lambda$ is a diagonal matrix. Each $\Lambda_{\mathcal{R}}$ and $\Lambda_{\mathcal{N}}$ represent submatrices of $\Lambda$ containing the eigenvalues of the range space and null space, respectively. Both $\mathcal{U}$ and $\mathcal{V}$ are square matrices, each with dimensions $p \times p$.

From \cref{eq:define_space}, we obtain a mathematical tool to define the range and null space of the covariance of the token, $\widetilde{\mathcal{T}}_s^d$. To approximate the range space, we choose the eigenvalue $\Lambda_{\mathcal{R}}$ along with their corresponding eigenvectors from $\mathcal{U}_{\mathcal{R}}$. On the other hand, when approximating the null space, we should select the eigenvalues $\Lambda_{\mathcal{N}}$ and their corresponding eigenvectors from $\mathcal{U}_{\mathcal{N}}$. Ideally, we should choose eigenvalues that are exactly zero to form the null space. However, in practice, $\Lambda$ can not be precisely zero. Therefore, it's essential to establish a criterion for selecting the eigenvalue to distinguish between these two spaces.

Instead of introducing a new manually designed rule for approximating each range and null space of $\widetilde{\mathcal{T}}_s^d$, we opt to directly employ the evaluation tool for the SVD \citep{jollife2016principal} as criteria for determining the range and null space of tokens. In assessing the accuracy of the SVD approximation, the proportion of total variance, denoted as $r$, has been employed:
\begin{align}
    r = \frac{\sum_{\lambda \in diag({\Lambda_\mathcal{N}})} \lambda}{\sum_{\lambda \in diag({\Lambda_\mathcal{R}})} \lambda}
\label{eq:r}
\end{align}
The $diag$ function serves as an operator, returning a set containing the diagonal entries of the input matrix. In our approach, we employ \cref{eq:r} to directly divide the range and null space of $\widetilde{\mathcal{T}}_s^d$. As depicted in \cref{fig:variance_proportion}, the diagonal elements of the matrix $\Lambda$, obtained through the SVD of $\Tilde{\mathcal{T}}_s^d$, are arranged in descending order based on their magnitudes. We can select the index of the eigenvalue $m$ such that the sum of eigenvalues up to order $m$ is smaller than $r$, and the sum up to $m+1$ is larger than $r$. This index serves as a boundary to divide the range space and null space of $\Tilde{\mathcal{T}}_s^d$.

\subsection{Types of Gradient Conflicts}
\label{sec:type_conflict}
In Section \ref{subsec:token_space}, we create a $p$-dimensional vector space using the uncentered covariance of the shared token $\widetilde{\mathcal{T}}_s^d$, linked to the input data set $\{\mathcal{X}\}_{l=1}^{n}$. This vector space is divided into the range and null space, with each space spanned by eigenvectors corresponding to singular values selected based on a specified ratio $r$. In the upcoming sections, we pinpoint the types of gradient conflict within the vector space we've constructed. We then address these conflicts adaptively by introducing token modulation and expansion techniques.

Using \cref{eq:define_space} and \cref{eq:r}, we can partition the eigenvectors of the $p$-dimensional vector space into its range and null space. Now, let's consider the shared tokens $\mathcal{T}_s^{l,d} = [\mathcal{T}_{s,1}^{l,d}, \dots, \mathcal{T}_{s,N}^{l,d}]$, omitting the explicit notation of $l,d$ for simplicity. For example, we write $\mathcal{T}_s^{l,d} \rightarrow \mathcal{T}_s$, $\mathcal{T}_{s,k}^{l,d} \rightarrow \mathcal{T}_{s,k}$, and $\Tilde{\mathcal{T}}_s^d \rightarrow \Tilde{\mathcal{T}}_s$.
We treat $\mathcal{T}_s$ as network parameters, for which gradients can be computed during the backpropagation process. Then, for each loss $\mathcal{L}_i$, the task-specific gradient for $\mathcal{T}_{s,k}$ is denoted as $g_i = \nabla_{\mathcal{T}_{s,k}} \mathcal{L}_i$. Consequently, we obtain task-specific gradients $\{g_i\}_{i=1}^{\mathcal{K}}$ corresponding to a set of losses $\{\mathcal{L}_i\}_{i=1}^{\mathcal{K}}$ for $\mathcal{T}_s$ as shown in \cref{fig:overview}-(a).

Each task-specific gradient $g_i$ can be decomposed into two components, $g_{\mathcal{R},i}$ and $g_{\mathcal{N},i}$, through projection onto the range and null space of $\Tilde{\mathcal{T}}_s^d$, respectively. This breakdown is expressed as follows:
\begin{align}
& g_{\mathcal{R},i} = (\mathcal{U}_\mathcal{R} \mathcal{U}_\mathcal{R}^T)\nabla_{\mathcal{T}_{s,k}}\mathcal{L}_i
& g_{\mathcal{N},i} = (\mathcal{U}_\mathcal{N} \mathcal{U}_\mathcal{N}^T)\nabla_{\mathcal{T}_{s,k}}\mathcal{L}_i
\label{eq:projection}
\end{align}
$\mathcal{U}_\mathcal{R}$ and $\mathcal{U}_\mathcal{N}$ are orthogonal matrices that consist of eigenvectors of the range space and null space, respectively, with each column representing one eigenvector. Then, 
the matrices $ (\mathcal{U}_\mathcal{R} \mathcal{U}_\mathcal{R}^T) $ and $ (\mathcal{U}_\mathcal{N} \mathcal{U}_\mathcal{N}^T) $ function as projection operators onto the range and null spaces, respectively.

Building upon the concept of conflicting gradients outlined in \cref{def:conflicting_gradients}, we classify conflicts into two types based on the space in which they occur: range space conflicts and null space conflicts. Specifically, conflicts in the range space of tokens occur when $g_{\mathcal{R},i} \cdot g_{\mathcal{R},j} \leq 0$ for any pair of $i$ and $j$ where $i \neq j$. Likewise, conflicts in the null space of tokens emerge when $g_{\mathcal{N},i} \cdot g_{\mathcal{N},j} \leq 0$.

\subsection{Token Modulation and Expansion}
Building on the gradient conflict types defined in \cref{sec:type_conflict}, we propose adaptive strategies to mitigate task interference. Specifically, if gradient conflicts occur in the range space, we apply an affine transformation to modulate tokens, while conflicts in the null space are addressed by introducing additional tokens to expand the feature space.
This distinction is particularly relevant in the transfer learning setting, where a pretrained transformer backbone is used, and task interference must be handled during fine-tuning. According to \cite{neyshabur2020being}, training from pretrained weights constrains the model within the same basin of the loss landscape, preserving a feature space similar to that of the pretrained network. This insight guides our separation of token space into range and null spaces: conflicts in the row space indicate that the network already has relevant interpretative capabilities and can be resolved through rotation or scaling, whereas conflicts in the null space suggest the need for additional features, requiring token expansion to enhance the model’s capacity.

\vspace{2pt}\noindent
\textbf{Token Modulation.} In situations where task-specific gradients conflict within the range space of $\widetilde{\mathcal{T}}_s$, such as $g_{\mathcal{R},i} \cdot g_{\mathcal{R},j} \leq 0$, modulators $\mathcal{M}_i$ and $\mathcal{M}_j$ are added after the shared token $\mathcal{T}_s$ as shown in \cref{fig:overview}-(b). The token modulator $\mathcal{M}$ is a straightforward affine transformation that modulates the shared token $\mathcal{T}_s$ along the channel dimension. To elaborate, considering the embedding dimension of the transformer $p$ and the number of shared tokens is $N$, we can arrange $\mathcal{T}_s$ in the form $[\mathcal{T}_{s,1},\dots,\mathcal{T}_{s,N}]$. This arrangement turns $\mathcal{T}_s$ into a $p\times N$ matrix. The modulator $\mathcal{M}$, which incorporates weight and bias $W, b \in \mathbb{R}^{p}$, performs the transformation $W \odot \mathcal{T}_{s,i} + b$, where $\odot$ denotes elementwise multiplication. When the gradient lies in the row space of $\Tilde{\mathcal{T}}_s$, \cref{theorem1} demonstrates that applying token modulation can effectively resolve gradient conflicts, lowering the multi-task loss.
\begin{restatable}[]{proposition}{theomone}
\label{theorem1}
When the input token $\mathcal{T}_{in}$ for input sample $\mathcal{X}_l$ spans the range space of $\Tilde{\mathcal{T}}_s$, optimizing the token modulators $\{\mathcal{M}_i\}_{i=1}^{\mathcal{K}}$ reduces gradient conflicts in the row space of $\Tilde{\mathcal{T}}_s$ and leads to a reduction in the multi-task loss.
\end{restatable}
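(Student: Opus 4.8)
The plan is to verify the two assertions of the proposition separately. Throughout I would initialize each modulator $\mathcal{M}_i$ as the identity affine map ($W_i=\vone$, $b_i=\vzero$), so that the augmented network initially computes exactly what the original one did; the shared tokens $\mathcal{T}_{s,k}$, the losses $\mathcal{L}_i$, and the gradients $g_i=\nabla_{\mathcal{T}_{s,k}}\mathcal{L}_i$ together with their range/null decomposition in \cref{eq:projection} are therefore unchanged at the starting point, and everything below is a statement about what a gradient step on the modulators does from there.

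The multi-task loss reduction is essentially structural. Since $\mathcal{M}_i$ lies only on the computation path of task $\tau_i$, the loss $\mathcal{L}_j$ is independent of $\mathcal{M}_i$ for $j\neq i$, so the gradient of $\sum_i w_i\mathcal{L}_i$ with respect to the parameters of $\mathcal{M}_i$ is exactly $w_i\nabla_{\mathcal{M}_i}\mathcal{L}_i$, and the parameter blocks $\{(W_i,b_i)\}_{i=1}^{\mathcal{K}}$ are pairwise disjoint. Hence one joint gradient-descent step on all modulators decreases each $w_i\mathcal{L}_i$ to first order (for a small enough step), and therefore decreases their sum. The only nontrivial point here is that at least one such step is nonvanishing; I would obtain this from the hypothesis that a genuine range-space conflict is present — $g_{\mathcal{R},i}\cdot g_{\mathcal{R},j}\le 0$ with both factors nonzero — together with the chain-rule computation below, which ties $\nabla_{\mathcal{M}_i}\mathcal{L}_i$ to $g_{\mathcal{R},i}$.

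For the conflict-reduction claim I would differentiate through $\mathcal{M}_i(\mathcal{T}_{s,k})=W_i\odot\mathcal{T}_{s,k}+b_i$ at the identity initialization, obtaining
\begin{align*}
\nabla_{b_i}\mathcal{L}_i=\textstyle\sum_{k} g_i^{(k)},\qquad
\nabla_{W_i}\mathcal{L}_i=\textstyle\sum_{k} g_i^{(k)}\odot\mathcal{T}_{s,k},
\end{align*}
with $g_i^{(k)}=\nabla_{\mathcal{T}_{s,k}}\mathcal{L}_i$. The role of the hypothesis that the input tokens $\mathcal{T}_{in}$ of sample $\mathcal{X}_l$ span $\mathcal{R}(\widetilde{\mathcal{T}}_s)$ is to make the modulator expressive in precisely the relevant directions: it guarantees that a modulator update can realize a perturbation of the (modulated) token along any vector of $\mathcal{R}(\widetilde{\mathcal{T}}_s)$, keeping the analysis confined to that subspace and leaving the null-space component $g_{\mathcal{N},i}$ essentially untouched. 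Consequently, taking the step $-\eta\,\nabla_{\mathcal{M}_i}\mathcal{L}_i$ moves task $\tau_i$'s effective token along $-g_{\mathcal{R},i}$; using $g_i=g_{\mathcal{R},i}+g_{\mathcal{N},i}$ and $g_{\mathcal{R},i}\perp g_{\mathcal{N},i}$ (the projection identities behind \cref{eq:projection}), this decreases $\mathcal{L}_i$ by an amount of order $\eta\lVert g_{\mathcal{R},i}\rVert^{2}$ and contracts $g_{\mathcal{R},i}$ to first order. Since this happens independently for every task (disjoint modulators again), each $g_{\mathcal{R},i}$ shrinks, so $|g_{\mathcal{R},i}\cdot g_{\mathcal{R},j}|$ decreases: the range-space conflict in the sense of \cref{def:conflicting_gradients} is reduced, while the multi-task loss simultaneously drops.

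The step I expect to be the main obstacle is turning the second claim into something precise: cleanly relating $\nabla_{\mathcal{M}_i}\mathcal{L}_i$ to the per-token quantity $g_{\mathcal{R},i}$ despite the fact that a single $(W_i,b_i)$ acts on all $N$ tokens at once, so a move that helps token $k$ can disturb the others, and fixing a crisp meaning of ``reduces the conflict'' — I would take it to be that along the modulator gradient flow $g_{\mathcal{R},i}\cdot g_{\mathcal{R},j}$ is driven monotonically toward $0$, with $g_{\mathcal{R},i}=0$ for all $i$ at any stationary point. This is exactly where the spanning hypothesis carries the weight: without it, part of $g_{\mathcal{R},i}$ could be unreachable by an affine per-channel modulation, and only a weak inequality would follow. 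A minor additional point is ensuring the first-order decrease dominates the curvature term, which I would handle with a standard small-step Taylor-remainder bound — the modulator parameters enter $\mathcal{M}_i$ linearly, which keeps the local landscape well behaved.
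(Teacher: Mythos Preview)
Your proposal is essentially correct and shares the paper's overall strategy: a first-order Taylor/gradient-descent argument exploiting that each $\mathcal{M}_i$ affects only $\mathcal{L}_i$, so the update contributes a nonpositive term $-\eta\sum_i\lVert\nabla_{\mathcal{M}_i}\mathcal{L}_i\rVert^2$ to the multi-task loss, together with a chain-rule link tying $\nabla_{\mathcal{M}_i}\mathcal{L}_i$ to the range-space token gradient so that stationarity of the modulators forces $g_{\mathcal{R},i}\to 0$ and hence removes the conflict. The differences are in how the spanning hypothesis is used and how much structure is put on $\mathcal{M}_i$. The paper reads ``$\mathcal{T}_{in}$ spans the range space'' as the approximate identity $\mathcal{U}_\mathcal{N}\mathcal{U}_\mathcal{N}^T\nabla_{\mathcal{T}_{in}}\mathcal{L}_i\simeq 0$, i.e.\ it kills the null component of the gradient outright and identifies $\nabla_{\mathcal{T}_{in}}\mathcal{L}_i$ with $g_{\mathcal{R},i}$ from the start; you instead treat the hypothesis as a richness condition ensuring the affine modulator can reach any range-space direction. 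The paper also models $\mathcal{M}_i$ in the proof as a full $p\times p$ matrix (rather than the diagonal-plus-bias form of the main text), which sidesteps exactly the per-channel expressivity issue you flag as your main obstacle: with a full matrix, driving $\nabla_{\mathcal{M}_i}\mathcal{L}_i\to 0$ forces the whole of $\nabla_{\mathcal{T}_{in}}\mathcal{L}_i$ to vanish via the chain rule. Your route is more faithful to the actual architecture and more explicit about the one-modulator-many-tokens coupling, which the paper's proof does not engage with; the paper's route is shorter but leans on the full-matrix abstraction and the $\simeq$ assumption to get there in one line.
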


\begin{algorithm}[t]
\DontPrintSemicolon
\caption{DTME-MTL}\label{alg:alg1}

\KwData{Task $\{\tau_i\}^\mathcal{K}_{i=1}$, Loss function $\{\mathcal{L}_i\}^\mathcal{K}_{i=1}$, \newline
        Dataset $\{\mathcal{X}_l, \mathcal{Y}_l\}_{l=1}^{n}$, Shared tokens $\mathcal{T}_s^{l,d} = \{\mathcal{T}_{s,i}^{l,d}\}_{i=1}^{N}$, Depth of the Network $D$ \newline} 

\For{each layer of the network ($d \gets 1$ to $D$)}{
    Get tokens $\{\mathcal{T}_s^{l,d}\}_{l=1}^n$ for the layer $d$ corresponding to input $\{\mathcal{X}_l\}_{l=1}^n$ \\
    
    Calculate uncentered covariance.
    $\widetilde{\mathcal{T}}_s^d = \frac{1}{n}\sum_{l=1}^{n}(\mathcal{T}_s^{l,d})(\mathcal{T}_s^{l,d})^T$ \\

    Singular value decomposition.
    $\mathcal{U}, \Lambda, \mathcal{V} = SVD(\widetilde{\mathcal{T}}_s^d)$ \\

    Divide range and null space.
    $\mathcal{U} = [\mathcal{U}_{\mathcal{R}}, \mathcal{U}_{\mathcal{N}}]$ \\
    Projection to range space.
    $\{g_{\mathcal{R},i}\}_{i=1}^{\mathcal{K}} = \{(\mathcal{U}_{\mathcal{R}} \mathcal{U}_{\mathcal{R}}^T)\nabla_{\mathcal{T}_{s,k}^{l,d}}\mathcal{L}_i\}_{i=1}^{\mathcal{K}}$\\

    Projection to null space.
    $\{g_{\mathcal{N},i}\}_{i=1}^{\mathcal{K}} = \{(\mathcal{U}_{\mathcal{N}} \mathcal{U}_{\mathcal{N}}^T)\nabla_{\mathcal{T}_{s,k}^{l,d}}\mathcal{L}_i\}_{i=1}^{\mathcal{K}}$\\
    
    \If{$g_{\mathcal{R},i} \cdot g_{\mathcal{R},j} \leq 0$}{Insert token modulators $\mathcal{M}_i$ and $\mathcal{M}_j$}
    \If{$g_{\mathcal{N},i} \cdot g_{\mathcal{N},j} \leq 0$}{Insert task-specific tokens $\mathcal{T}_i$ and $\mathcal{T}_j$}}
\end{algorithm}

\begin{table*}[t]
\vspace{-10pt}
\caption{We conduct an ablation study on dynamic token modulation and expansion, evaluating the multi-task performance of our method on NYUD-v2 and PASCAL-Context. The results of TE, TM, and their combination, TE+TM are presented. We employ a shared encoder and multiple decoders, using ViT-T as the backbone network.}
\vspace{-5pt}
\centering
\renewcommand\arraystretch{1.00}
\resizebox{0.90\textwidth}{!}{
\tiny
\begin{tabular}{l|cccc|ccccc}
\hline
\multicolumn{1}{c|}{}  & \multicolumn{4}{c|}{NYUD-v2}  & \multicolumn{5}{c}{PASCAL-Context}  \\ \cline{2-10}  
\multicolumn{1}{c|}{}  & Semseg  & Depth  & Normal  & Edge  & Semseg  & Parsing  & Saliency  & Normal  & Edge  \\
\multicolumn{1}{c|}{\multirow{-3}{*}{Model}} & mIoU $\uparrow$  & RMSE $\downarrow$  & mErr $\downarrow$  & odsF $\uparrow$  & mIoU $\uparrow$  & mIoU $\uparrow$  & maxF $\uparrow$  & mErr $\downarrow$  & odsF $\uparrow$  \\ \hline
Baseline (ST)  &39.35  &0.6611  &22.14  &59.68  &67.96  &58.90  &83.76  &15.65  &47.70  \\ \hline
Baseline (MT)  &34.13  &0.6732  &22.51  &55.30  &54.47  &51.48  &82.04  &16.22  &41.28  \\
TM             &37.85  &0.6490  &21.75  &56.92  &64.28  &55.10  &83.02  &15.40  &45.80  \\
TE             &37.25  &0.6553  &21.87  &57.00  &60.51  &54.00  &82.85  &15.55  &44.98  \\
TM+TE          &38.27  &0.6370  &21.64  &57.90  &66.18  &56.29  &83.41  &15.26  &47.00  \\
Gain (vs. MT)  &\textbf{$\triangle$4.14}  &\textbf{$\triangle$0.0362}  &\textbf{$\triangle$0.87}  &\textbf{$\triangle$2.60}  &\textbf{$\triangle$11.71}  &\textbf{$\triangle$4.81}  &\textbf{$\triangle$1.37}  &\textbf{$\triangle$0.96}  &\textbf{$\triangle$5.72}  \\ \hline
$\triangle_m$ $\uparrow$    & \multicolumn{4}{c|}{\textbf{0.044}} & \multicolumn{5}{c}{-\textbf{1.289}}  \\ \hline
\#Param $\uparrow$ (\%) & \multicolumn{4}{c|}{\textbf{0.24}} & \multicolumn{5}{c}{\textbf{0.30}}  \\ \hline
\end{tabular}}
\label{tab:tab_ablation}
\end{table*}
\begin{figure*}[t]
    \vspace{-5pt}
    \centering
    \includegraphics[width=0.90\linewidth]{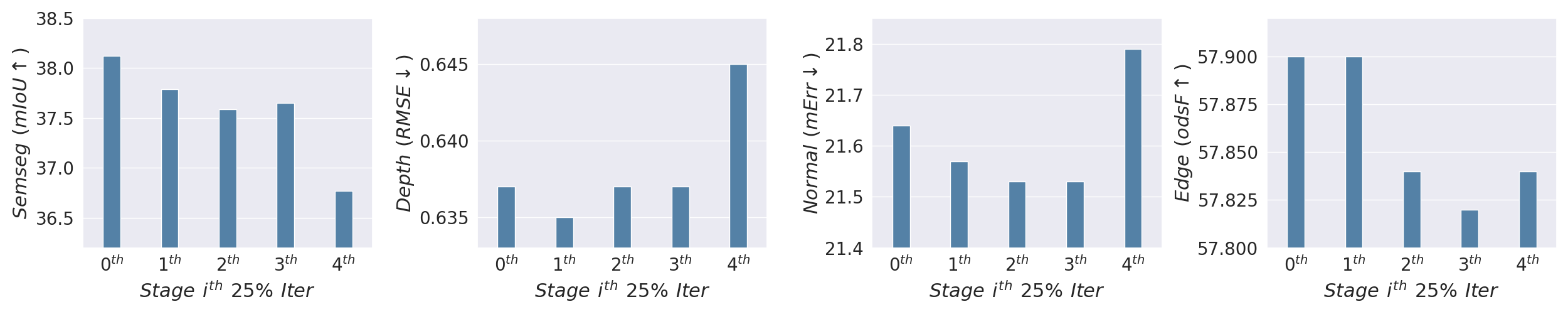}
    \vspace{-8pt}
    \caption{Task performance varies based on when we expand the network. To determine the optimal timing, we assess expansions at the beginning of training and at the end of each quarter iteration, monitoring the corresponding changes in performance.}
\label{fig:iter_perform}
\vspace{-5pt}
\end{figure*}

\vspace{2pt}\noindent
\textbf{Token Expansion.} Similarly, in cases where task-specific gradients conflict within the null space of $\widetilde{\mathcal{T}}_s$, such as $g_{\mathcal{N},i} \cdot g_{\mathcal{N},j} \leq 0$, task-specific tokens $\mathcal{T}_i$ and $\mathcal{T}_j$ are added alongside shared tokens $\mathcal{T}_s$ as shown in \cref{fig:overview}-(c). The task-specific tokens $\{\mathcal{T}_i\}_{i=1}^{\mathcal{K}}$ are concatenated with shared tokens before entering the transformer block. Consequently, each task-specific token acquires task-specific information within that layer. Specifically, in a standard transformer block, self-attention is performed for each pair of tokens in the form of $[\mathcal{T}_{s,1},\dots,\mathcal{T}_{s,N}] \times [\mathcal{T}_{s,1},\dots,\mathcal{T}_{s,N}]$. With token expansion, attention is extended to include $[\mathcal{T}_{s,1},\dots,\mathcal{T}_{s,N}] \times [\mathcal{T}_{1},\dots,\mathcal{T}_{\mathcal{K}}]$ on the output.
\cref{theorem2} explains how expanding the token space to address gradient conflicts in the null space of $\Tilde{\mathcal{T}}_s$ leads to a reduction in multi-task loss when the gradient lies in this null space. All proofs are provided in Supple~\ref{Append:proof}.
\begin{restatable}[]{proposition}{theomtwo}
\label{theorem2}
When the input token $\mathcal{T}_{in}$ for input sample $\mathcal{X}_l$ spans the null space of $\Tilde{\mathcal{T}_s}$, token expansion using $\{\mathcal{T}_i\}_{i=1}^{\mathcal{K}}$ alleviates the increase in multi-task loss caused by gradient conflicts in the null space of $\Tilde{\mathcal{T}}_s$.
\end{restatable}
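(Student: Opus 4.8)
\textbf{Proof proposal for Proposition~\ref{theorem2}.}

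The plan is to mirror the structure of the argument for Proposition~\ref{theorem1}, but exploit the fact that when $\mathcal{T}_{in}$ lies in the null space of $\widetilde{\mathcal{T}}_s$, the shared tokens $\mathcal{T}_s$ and the modulators acting on them are, by construction, \emph{blind} to this component: anything we want the network to learn in the null-space direction must be carried by the newly introduced task-specific tokens $\{\mathcal{T}_i\}_{i=1}^{\mathcal{K}}$. First I would set up the first-order (gradient-descent) change in the multi-task loss $\sum_i w_i \mathcal{L}_i$ after one update step, decomposing each task gradient as $g_i = g_{\mathcal{R},i} + g_{\mathcal{N},i}$ per \cref{eq:projection}. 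The hypothesis $g_{\mathcal{N},i}\cdot g_{\mathcal{N},j}\le 0$ means the shared-token null-space update direction $\sum_i w_i g_{\mathcal{N},i}$ suffers cancellation, so the loss decrease attributable to the shared path in the null space is throttled (in the worst case, gradients nearly cancel and the shared update yields essentially no progress, or even a loss increase once curvature is accounted for). This is the "increase in multi-task loss caused by gradient conflicts in the null space" that the statement refers to.

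Next I would show that the appended tokens supply an \emph{independent} descent direction that does not inherit this cancellation. Concretely, after concatenation the self-attention output gains the block $[\mathcal{T}_{s,1},\dots,\mathcal{T}_{s,N}]\times[\mathcal{T}_1,\dots,\mathcal{T}_{\mathcal{K}}]$, and since token $\mathcal{T}_i$ is task-specific, the gradient $\nabla_{\mathcal{T}_i}\mathcal{L}_j$ for $j\ne i$ contributes nothing to the update of $\mathcal{T}_i$ (only $\mathcal{L}_i$ drives $\mathcal{T}_i$, up to the shared cross-attention coupling). Therefore a gradient step on $\mathcal{T}_i$ produces a first-order loss change $-\lr\, w_i \|\nabla_{\mathcal{T}_i}\mathcal{L}_i\|^2 \le 0$ for each task separately, with no inter-task sign conflict by construction. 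I would then argue that $\nabla_{\mathcal{T}_i}\mathcal{L}_i$ is generically nonzero precisely when $g_{\mathcal{N},i}\ne 0$ — i.e., the task "wants" to move in a direction the current range space cannot represent — because the task-specific token is the only remaining degree of freedom aligned with the null-space component of the input. Summing over $i$, the total first-order decrease from the expansion step is $-\lr\sum_i w_i\|\nabla_{\mathcal{T}_i}\mathcal{L}_i\|^2$, which is strictly negative whenever any null-space conflict is active, hence it compensates (alleviates) the stalled progress of the shared path.

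For the technical core I would make the "spans the null space" hypothesis do the heavy lifting: when $\mathcal{T}_{in}\in\mathcal{N}(\widetilde{\mathcal{T}}_s)$, the projection $(\mathcal{U}_{\mathcal{R}}\mathcal{U}_{\mathcal{R}}^T)\mathcal{T}_{in}=0$, so the pre-expansion layer output depends on $\mathcal{T}_{in}$ only through quantities that are insensitive to the very direction where the conflict lives; after expansion, the attention logits between $\mathcal{T}_{s,k}$ and $\mathcal{T}_i$ reintroduce an explicit dependence on this direction through the value projection applied to $\mathcal{T}_i$, giving a nonvanishing Jacobian $\partial(\text{output})/\partial\mathcal{T}_i$. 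Combining this with the per-task separability of $\nabla_{\mathcal{T}_i}\mathcal{L}_i$ yields the claim. I expect the main obstacle to be making precise the sense in which the expansion "alleviates" the increase: rather than a clean strict-decrease statement, the honest version is a \emph{comparison} — the loss change with expansion is upper-bounded by the loss change without it, with the gap equal to $\lr\sum_i w_i\|\nabla_{\mathcal{T}_i}\mathcal{L}_i\|^2$ plus higher-order terms — so I would state and prove it in that comparative first-order form, and flag the standard caveat that second-order (smoothness/step-size) conditions on the $\mathcal{L}_i$ are needed to convert the first-order inequality into an actual loss decrease, exactly as in the proof of Proposition~\ref{theorem1}.
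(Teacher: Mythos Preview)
Your proposal follows essentially the same route as the paper: a first-order Taylor expansion of the multi-task loss after one gradient step, isolating the per-task term $-\eta\,\|\nabla_{\mathcal{T}_i}\mathcal{L}_i\|^2$ contributed by the task-specific tokens as a nonpositive correction immune to inter-task cancellation, which is exactly how the paper cashes out ``alleviates.'' The only notable deviation is that the paper operationalizes ``$\mathcal{T}_{in}$ spans the null space'' as the \emph{gradient} condition $\mathcal{U}_\mathcal{R}\mathcal{U}_\mathcal{R}^T\nabla_{\mathcal{T}_{in}}\mathcal{L}_i\simeq 0$ (rather than your token-level condition $(\mathcal{U}_\mathcal{R}\mathcal{U}_\mathcal{R}^T)\mathcal{T}_{in}=0$) and packages the expanded token $[\mathcal{T}_{in},\mathcal{T}_i]$ via a block-projection identity before Taylor-expanding---neither changes the mechanism, and your comparative first-order phrasing of the conclusion is arguably cleaner than what the paper actually writes.
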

The complete procedure for DTME-MTL is outlined in Alg.\ref{alg:alg1}. 
Handling gradient conflicts in token space improves adaptability and reduces overfitting while being more efficient than addressing conflicts at the parameter level.
\begin{table*}[tb]
\vspace{-10pt}
\caption{Performance comparison based on the degree of conflicts in reversed order (Reversed) and randomly selected layers (Random).}
\vspace{-5pt}
\centering
\renewcommand\arraystretch{1.00}
\tiny
\resizebox{0.95\textwidth}{!}{
\begin{tabular}{c|cccc|ccccc}
\hline
\multirow{3}{*}{}  & \multicolumn{4}{c|}{NYUD-v2}   & \multicolumn{5}{c}{PASCAL-Context} \\ \cline{2-10} 
& \multicolumn{1}{c|}{Semseg} & \multicolumn{1}{c|}{Depth}  & \multicolumn{1}{c|}{Normal} & Edge  & \multicolumn{1}{c|}{Semseg} & \multicolumn{1}{c|}{Parsing} & \multicolumn{1}{c|}{Saliency} & \multicolumn{1}{c|}{Normal} & Edge  \\
\multicolumn{1}{c|}{\multirow{-3}{*}{Model}} & \multicolumn{1}{c|}{mIoU $\uparrow$}   & \multicolumn{1}{c|}{RMSE $\downarrow$}   & \multicolumn{1}{c|}{mErr $\downarrow$}   & odsF $\uparrow$ & \multicolumn{1}{c|}{mIoU $\uparrow$}   & \multicolumn{1}{c|}{mIoU $\uparrow$}    & \multicolumn{1}{c|}{maxF $\uparrow$}     & \multicolumn{1}{c|}{mErr $\downarrow$}   & odsF $\uparrow$ \\ \hline
\multicolumn{1}{l|}{TM+TE}     & \multicolumn{1}{c|}{\textbf{38.27}}  & \multicolumn{1}{c|}{\textbf{0.6370}} & \multicolumn{1}{c|}{\textbf{21.64}}  & \textbf{57.90} & \multicolumn{1}{c|}{\textbf{66.18}}  & \multicolumn{1}{c|}{\textbf{56.29}}   & \multicolumn{1}{c|}{\textbf{83.21}}    & \multicolumn{1}{c|}{\textbf{15.26}}  & \textbf{47.00} \\
\multicolumn{1}{l|}{TM+TE (Random)} & \multicolumn{1}{c|}{36.88}  & \multicolumn{1}{c|}{0.6567} & \multicolumn{1}{c|}{22.27}  & 56.30 & \multicolumn{1}{c|}{62.12}  & \multicolumn{1}{c|}{54.43}   & \multicolumn{1}{c|}{82.95}    & \multicolumn{1}{c|}{15.55}  & 45.80 \\
\multicolumn{1}{l|}{TM+TE (Reverse)} & \multicolumn{1}{c|}{34.71}  & \multicolumn{1}{c|}{0.6898} & \multicolumn{1}{c|}{22.59}  & 55.80 & \multicolumn{1}{c|}{57.84}  & \multicolumn{1}{c|}{52.82}   & \multicolumn{1}{c|}{82.75}    & \multicolumn{1}{c|}{15.74}  & 43.20 \\ \hline
\end{tabular}}
\label{tab:ablation_random}
\vspace{-5pt}
\end{table*}
\begin{figure*}[t]
\centering
\vspace{-5pt}
\begin{subfigure}{.46\linewidth}
    \includegraphics[width=\linewidth]{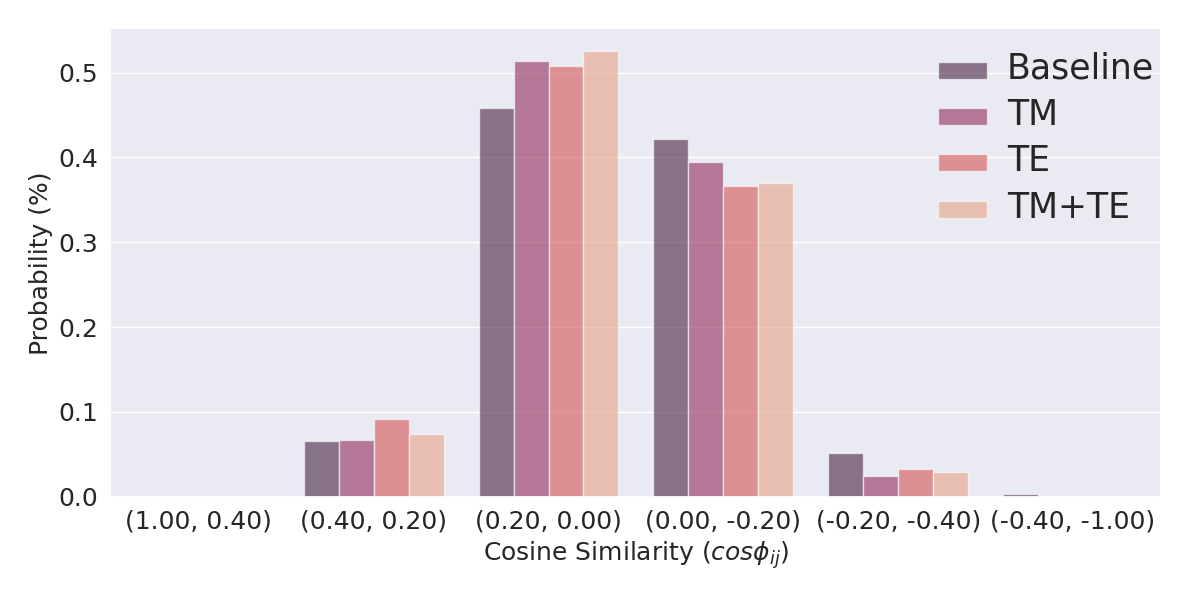}
    \caption{NYUD-v2}
\end{subfigure}
\begin{subfigure}{.46\linewidth}
    \includegraphics[width=\linewidth]{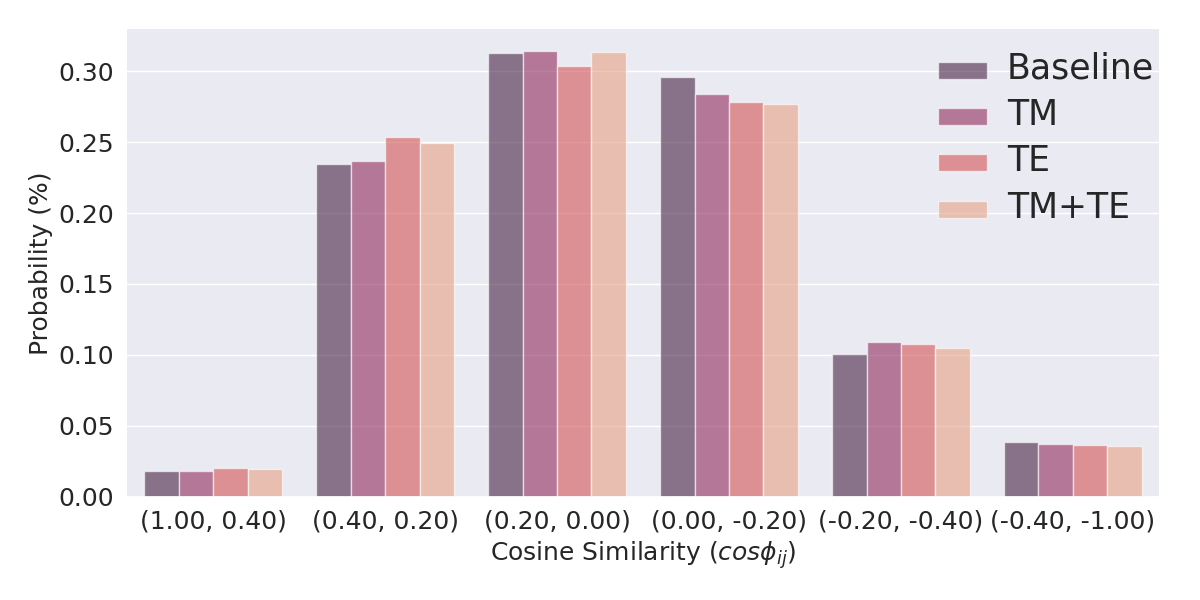}
    \caption{PASCAL-Context}
\end{subfigure}
\vspace{-5pt}
\caption{We evaluate the distribution of gradient conflicts by measuring the cosine similarity between task-specific gradients across all shared parameters throughout the optimization process. This is represented as $cos\phi_{ij}$ in (a) for NYUD-v2 and in (b) for PASCAL-Context.}
\vspace{-4pt}
\label{fig:conflicts}
\end{figure*}

\section{Experiments}
\subsection{Experimental Settings}
\noindent\textbf{Datasets and Evaluation.}
Our method is evaluated on multi-task datasets: NYUD-v2 \citep{RN15}, PASCAL-Context \citep{mottaghi2014role} and Taskonomy \citep{zamir2018taskonomy}. Each of them with 4, 5, 11 tasks. To evaluate the performance of tasks, we employed widely used metrics. To evaluate the multi-task performance, we utilize the metric proposed by \citet{RN2}. It measures the per-task performance $M_{m,i}$ by averaging it with respect to the single-task baseline $M_{b,i}$, as shown in $\triangle_m = (1/\mathcal{K})\sum_{i=1}^{\mathcal{K}}(-1)^{l_i}(M_{m,i}-M_{b,i})/M_{b,i}$. $l_i = 1$ if a lower value of the measure $M_i$ indicates better performance for task $i$, and 0 otherwise.

\noindent\textbf{Baselines and Model Variants.}
For a comprehensive analysis of the proposed DTME-MTL framework, we adopt a typical experimental setup for MTL in our experiments. In \cref{tab:tab_ablation}, `Baseline (MT)' refers to a simple multi-task architecture consisting of a shared transformer backbone and basic task-specific decoders. Each decoder comprises one $3 \times 3$ Conv-BN-ReLU block. `Baseline (ST)' has the same structure as `Baseline (MT)' but is trained with only a single task. We assess the proposed DTME-MTL framework by expanding the network from `Baseline (MT)' and measure the performance gains achieved by the proposed methods. `TM' (Token Modulation) signifies the addition of the proposed token modulator to `baseline (MT)', while `TE' (Token Expansion) indicates the incorporation of task-specific tokens onto `Baseline (MT)'. Finally, `TM+TE' combines both proposed methods. To show how effectively our approach reduces negative transfer, we also compare it with previous multi-task optimization, though our methods can be used alongside them. We include simple gradient descent (GD), GradDrop \citep{RN21}, MGDA \citep{RN36}, PCGrad \citep{RN20}, CAGrad \citep{RN18}, IMTL \citep{liu2021towards}, Nash-MTL \citep{navon2022multi}, and Aligned-MTL \citep{senushkin2023independent}, as well as loss balancing methods such as UW \citep{RN23}, DWA \citep{RN26}, and FAMO \citep{liu2024famo}. We also compare our results with dynamic network architecture such as Recon \citep{guangyuan2022recon}. Further experimental details are summarized in Supple~\ref{Append:exp_settings}.

\begin{table*}[t]
    \centering
    \vspace{-5pt}
    \caption{Comparison of multi-task optimization methods on Taskonomy across 11 tasks. Non-converged results are indicated with a dash.}
    \vspace{-5pt}
    \resizebox{\linewidth}{!}{
    \begin{tabular}{l|ccccccccccc|c}
    \toprule
        Task       & DE     & DZ      & EO     & ET     & Key2D    & Key3D   & N      & PC     & R      & S2D    & S25D   &    \\
        Metric    & L1 Dist. $\downarrow$  & L1 Dist. $\downarrow$ & L1 Dist. $\downarrow$ & L1 Dist. $\downarrow$ & L1 Dist. $\downarrow$ & L1 Dist. $\downarrow$ & L1 Dist. & RMSE $\downarrow$    & L1 Dist. $\downarrow$ & L1 Dist. $\downarrow$ & L1 Dist. $\downarrow$  & $\triangle_m$ $\uparrow$ $(\%)$ \\ \midrule 
        ST    & 0.0199&0.0195&0.1085&0.1714&0.1633&0.0872&0.2715&0.7586&0.1503&0.1742&0.1504& 0.00 \\ \midrule
        GD        & 0.0187&0.0188&0.1301&0.1757&0.1733&0.0942&0.3076&0.7991&0.1826&0.1902&0.1652&- 7.83 \\
        GradDrop \citep{RN21}  & 0.0315&0.0242&0.1390&0.1776&0.1778&0.0976&0.4564&0.8644&0.2088&0.1995&0.1752&- 26.11 \\
        MGDA \citep{RN36}      & - & - & - & - & - & - & - & - & - & - & - & - \\
        UW \citep{RN23}       & 0.0190&0.0190&0.1308&0.1758&0.1734&0.0945&0.3109&0.8009&0.1840&0.1906&0.1657&- 8.43 \\
        DWA \citep{RN26}       & 0.0186&0.0187&0.1294&0.1759&0.1735&0.0938&0.2788&0.7943&0.1805&0.1902&0.1640&- 6.45 \\
        PCGrad \citep{RN20}    & 0.0217&0.0192&0.1298&0.1775&0.1714&0.0939&0.2856&0.7985&0.1817&0.1927&0.1595&- 8.29 \\  
        CAGrad \citep{RN18}    & 0.0219&0.0203&0.1314&0.1800&0.1665&0.0932&0.3039&0.8121&0.1874&0.1953&0.1673&- 10.57 \\
        IMTL \citep{liu2021towards}      & 0.0210&0.0192&0.1282&0.1772&0.1719&0.0936&0.2468&0.7784&0.1734&0.1943&0.1647&- 6.17 \\
        Align-MTL \citep{senushkin2023independent} & 0.0189&0.0193&0.1254&\textbf{0.1728}&\textbf{0.1664}&0.0914&0.3524&0.8640&0.1938&0.1889&0.1582&- 9.41 \\
        Nash-MTL \citep{navon2022multi}  & 0.0201&0.0184&0.1248&0.1764&0.1701&0.0921&0.2658&0.7793&0.1706&0.1914&0.1624&- 5.01 \\
        FAMO \citep{liu2024famo}     & 0.0188&0.0188&0.1300&0.1758&0.1733&0.0942&0.3058&0.7986&0.1826&0.1904&0.1654&- 7.87 \\
        DTME-MTL  & \textbf{0.0150}&\textbf{0.0154}&\textbf{0.1193}&0.1733&0.1668&\textbf{0.0891}&\textbf{0.2038}&\textbf{0.7373}&\textbf{0.1567}&\textbf{0.1773}&\textbf{0.1517}&+ \textbf{4.67} \\   
        \bottomrule 
    \end{tabular}}
    \label{tab:opt_taskonomy_vitB}
    \vspace{-5pt}
\end{table*}

\begin{table}[t]
    \vspace{-5pt}
    \caption{Adaptation of DTME-MTL to other state-of-the-art MTL methods on NYUD-v2.}
    \vspace{-5pt}
    \centering
    \renewcommand\arraystretch{0.90}
    \resizebox{0.88\linewidth}{!}{
    \begin{tabular}{l|c|c|c|c}
    \hline
    Task         & Semseg & Depth  & Normal & Edge \\
    Metric       & mIoU $\uparrow$  & RMSE $\downarrow$  & mErr $\downarrow$  & odsF $\uparrow$    \\ \hline
    MTformer \citep{mtformer}   & 50.04  & 0.490 & - & - \\
    InvPT \citep{invpt}       & 53.56  & 0.5183 & 18.81  & 78.10    \\
    \rowcolor[HTML]{E0E0E0}
    + DTME-MTL       & 54.38  & \textbf{0.5020} & 18.51  & \underline{78.20}    \\
     Taskprompter \citep{ye2022taskprompter}       & \underline{55.30}  & 0.5152 & \underline{18.47}  & \underline{78.20}    \\
    \rowcolor[HTML]{E0E0E0}
    + DTME-MTL        & \textbf{56.36}  & \underline{0.5122} & \textbf{18.38}  & \textbf{78.40}    \\
    \hline
    \end{tabular}}
    \label{tab:tab_sota_1}
    \vspace{-5pt}
\end{table}
\begin{table}[t]
    \caption{Adaptation of DTME-MTL to other state-of-the-art MTL methods on PASCAL-Context.}
    \vspace{-9pt}
    \centering
    \renewcommand\arraystretch{0.96}
    \resizebox{0.96\linewidth}{!}{
        \begin{tabular}{l|c|c|c|c|c}
        \hline
        Task       & Semseg & Parsing & Saliency & Normal & Edge  \\
        Metric     & mIoU $\uparrow$  & mIoU $\uparrow$   & maxF $\uparrow$    & mErr $\downarrow$  & odsF $\uparrow$ \\ \hline
        MTformer \citep{mtformer} & 73.51 & 64.26 & 67.24 & - & - \\
        InvPT \citep{invpt}      & 79.03  & 67.61   & 84.81    & 14.15  & 73.00 \\
        \rowcolor[HTML]{E0E0E0}
        + DTME-MTL      & \textbf{81.91}  & \textbf{71.13}   & \textbf{84.96}    & 13.73  & \textbf{73.80} \\
        Taskprompter \citep{ye2022taskprompter}      & 80.89  & 68.89   & \underline{84.83}    & \underline{13.72}  & 73.50 \\
        \rowcolor[HTML]{E0E0E0}
        + DTME-MTL      & \underline{81.01}  & \underline{69.08}   & 84.75    & \textbf{13.65}  & \underline{73.60} \\
        \hline
        \end{tabular}
    }
    \label{tab:tab_sota_2}
    \vspace{-12pt}
\end{table}

\subsection{Experimental Results}
\textbf{Effectiveness of Token Modulation and Expansion.} We assess the effectiveness of the proposed methods on the NYUD-v2 and PASCAL-Context datasets, with results detailed in \cref{tab:tab_ablation}. In the last three rows of the table, we depict the performance gains compared to the two baselines and the increased number of parameters in ``$\#Param\uparrow$ ($\%$)". Compared to the Baseline (MT), our methods demonstrate significant performance improvements across all tasks in both datasets. Particularly noteworthy is the substantial increase in multi-task performance achieved with just a 0.2\% to 0.3\% increase in the total network parameters. Additionally, our approach exhibits comparable performance to Baseline (ST) in a multi-task scenario. This implies that reducing negative transfer is achievable by simply integrating token modulators and task tokens, without complex modules.

\vspace{2pt}
\noindent\textbf{Analysis of the Timing of Network Expansion.}
In \cref{fig:iter_perform}, we analyze the performance of each task according to the timing of network expansion using the proposed DTME-MTL. Specifically, the timing for expansion refers to the point at which token modulation and expansion are performed based on calculations of the token space using Singular Value Decomposition and measurement of gradient conflicts. The figure illustrates the performance results when network expansion is conducted at the beginning of training ($0^{th}$) and after each quarter of the entire training process ($i^{th}$ $25\%$ Iter). To ensure fair comparisons, we trained the network using the same number of iterations after the expansion. The results indicate that the optimal timing for expansion may vary across tasks. However, overall, early-stage expansion during network training tends to yield better performance. This aligns with the fact that our approach builds on pre-trained backbone networks.

\vspace{2pt}
\noindent\textbf{Analysis of Gradient Conflicts in Parameters.}  
We focus on resolving gradient conflicts in token space. While our primary method operates in token-level representations, we also monitor gradient conflicts in parameter space to better understand training dynamics. In \cref{fig:conflicts}, we visualize the distribution of angles between task-specific gradients of network parameters, categorizing them into different ranges and tracking their frequency over the course of training. When applied to the baseline model, both Token Modulation (TM) and Token Expansion (TE) reduce gradient conflicts in parameter space to some extent. However, as shown in \cref{tab:recon}, where our methods are compared with Recon \cite{guangyuan2022recon}, we observe important differences. Recon explicitly suppresses conflicts by modifying network parameters whenever the cosine similarity between task gradients becomes negative. Although this reduces gradient conflicts in parameter space, it often results in severe overfitting and degraded multi-task performance. These findings suggest that conflicts in parameter space are not always reliable indicators of negative transfer. Instead, resolving conflicts in token space offers an alternative strategy that avoids overfitting while still mitigating interference. Additional analysis of token-level conflicts is provided in \cref{sup:additional_analysis}.

\vspace{2pt}
\noindent\textbf{Computational Cost of DTME-MTL.} In \cref{tab:tab_time}, we report the time consumption for each process of DTME-MTL on PASCAL-Context using a single NVIDIA RTX A6000. We measure the time required for calculating the token space with SVD and for computing gradient conflicts in the token space of the transformer. The time required for each process increases with the size of the transformer. However, the proposed methods are computationally efficient, requiring approximately 1 hour with ViT-L. Considering that typical multi-task architectures require at least one day of training on a single GPU, the computational cost of DTME-MTL is relatively low. Proposed DTME-MTL increases inference time of each task about $13.4\%$ with ViT-B.

\vspace{2pt}
\noindent\textbf{Comparing Performance based on Layer Selection Criteria.}
In \cref{tab:ablation_random}, we applied TM+TE to layers with the highest gradient conflicts between tasks. Results are also shown for randomly chosen layers (Random) or layers with the lowest gradient conflicts (Reverse). The network expansion system, using conflict detection, outperforms random selection across all tasks. Particularly, applying TM+TE to layers with severe conflict levels consistently outperforms its application in layers with lower conflict levels, validating the effectiveness of the strategy.

\vspace{2pt}
\noindent\textbf{Comparison with Multi-Task Optimization.}
In \cref{tab:opt_taskonomy_vitB}, we compare DTME-MTL with previous multi-task optimization approaches to demonstrate its effectiveness in reducing negative transfer between tasks on the Taskonomy benchmark using ViT-B. DTME-MTL achieves the best multi-task performance, improving each task by an average of $4.67\%$ with only a $0.118\%$ increase in the number of parameters. Although DTME-MTL introduces additional parameters to address negative transfer, making direct comparisons with optimization methods less straightforward, it consistently improves multi-task performance. However, using more task-specific parameters does not always lead to better performance. As as shown in \cref{tab:recon}, Recon shows poor results with the vision transformer on NYUD-v2. This comparison highlights that previous multi-task expansion approaches, which naively duplicate network branches, are not only parameter-inefficient but also prone to overfitting due to the increased complexity of transformers.

\begin{table}[t]
\vspace{-5pt}
\caption{Time consumption of each process in DTME-MTL across different backbone sizes, recorded in minutes.}
\vspace{-7pt}
\scriptsize
\centering
\resizebox{0.96\linewidth}{!}{
\begin{tabular}{l|cccc}
\hline
Process (min)              & ViT-T   & ViT-S    & ViT-B    & ViT-L    \\ \hline
Calculate Token Space (SVD) & 3.61 & 3.74 & 11.54& 11.96 \\
Calculate Gradient Conflict & 8.25 & 16.43 & 21.94& 58.66 \\ \hline
\end{tabular}
}
\vspace{-10pt}
\label{tab:tab_time}
\end{table}

\begin{table}[t]
\caption{Comparison with Recon on NYUD-v2}
\vspace{-8pt}
\centering
\renewcommand\arraystretch{1.00}
\scriptsize
\begin{tabular}{l|ccccc}
\hline
\multicolumn{1}{c|}{} & Semseg  & Depth  & Normal  & Edge & \#Param $\uparrow$ (\%) \\ 
\multicolumn{1}{c|}{\multirow{-2}{*}{Method}} & mIoU $\uparrow$  & RMSE $\downarrow$  & mErr $\downarrow$  & odsF $\uparrow$ & \\ \hline
Joint      &34.13  &0.673  &22.51  &56.38  &    0.0\\
Recon \citep{guangyuan2022recon}     &31.92  &0.693  &23.35  &52.80  &    23.34\\
Ours       &\textbf{38.27}  &\textbf{0.6370}  &\textbf{21.64}  & \textbf{57.90}  &  \textbf{0.24}\\ \hline
\end{tabular}
\vspace{-10pt}
\label{tab:recon}
\end{table}

\vspace{2pt}\noindent
\textbf{Adapting to Multi-Task Architectures.} In \cref{tab:tab_sota_1,tab:tab_sota_2}, we compare DTME-MTL with leading multi-task architectures on the NYUD-v2 and PASCAL-Context datasets. We evaluate its multi-task performance against transformer-based approaches. Our method is compatible with any transformer-based multi-task architecture, enabling us to assess its effectiveness by integrating it into two leading models: InvPT and TaskPrompter. DTME-MTL seamlessly enhances these architectures, significantly boosting performance with only a minimal increase in parameters — just $0.048\%$ for InvPT and $0.046\%$ for TaskPrompter.
\section{Conclusion}
We introduced Dynamic Token Modulation and Expansion for Multi-Task Learning, an efficient approach for improving transformer-based MTL architectures. By categorizing gradient conflicts into range space and null space, it adaptively applies token modulation and expansion to mitigate negative transfer and reduce overfitting. DTME-MTL seamlessly integrates with existing transformer-based MTL frameworks, requiring minimal additional parameters. By refining encoded token space, it provides a lightweight and scalable solution for enhancing multi-task performance.

\section*{Acknowledgment}
This research was supported by the Challengeable Future Defense Technology Research and Development Program through the Agency For Defense Development(ADD) funded by the Defense Acquisition Program Administration(DAPA) in 2025(No.915102201).

{
    \small
    \bibliographystyle{ieeenat_fullname}
    \bibliography{main}
}

\appendix
\maketitlesupplementary
\section{Additional Related Works}
\textbf{Multi-Task Architectures.} Various multi-task architectures can be categorized based on how the parameters or features of the sharing network are distributed among tasks. The widely used shared trunk structure comprises a common encoder shared by multiple tasks and a dedicated decoder for each task \citep{RN51, RN52, RN49, RN50}. A tree-like architecture, with multiple division points for each task group, offers a more generalized structure \citep{treelike1, treelike2, treelike3, treelike4}. The cross-talk architecture employs separate symmetrical networks for each task, utilizing feature exchange between layers at the same depth for information sharing between tasks \citep{RN43, RN29}. The prediction distillation model \citep{RN9, RN29, RN32, pap} incorporates cross-task interactions at the end of the shared encoder, while the task switching network \citep{RN30, RN40, RN42, RN2} changes network parameters depending on the task.

\section{Experimental Settings}
\label{Append:exp_settings}
\subsection{Datasets}
We evaluate our method on three benchmarks: NYUD-v2, PASCAL-Context, and Taskonomy. NYUD-v2 contains 4 vision tasks: Our evaluation is based on depth estimation, semantic segmentation, surface normal prediction, and edge detection. PASCAL-Context contains 5 tasks: We evaluate semantic segmentation, human parts estimation, saliency estimation, surface normal prediction, and edge detection. We used 11 tasks for Taskonomy: We evaluate Depth Euclidean (DE), Depth Zbuffer (DZ), Edge Texture (ET),  Keypoints 2D (Key2D), Keypoints 3D (Key3D), Normal (N), Principal Curvature (PC), Reshading (R), Segment Unsup 2d (S2D), and Segment Unsup 2.5D (S25D).

\subsection{Implementation Details}
For experiments, we adopt ViT \citep{vit} pre-trained on ImageNet-22K \citep{deng2009imagenet} as the transformer encoder. The models are trained for 60,000 iterations on both NYUD \citep{RN15} and PASCAL \citep{RN12} datasets with batch size 6. We use Adam optimizer with learning rate $2\times$$10^{-5}$ and $1\times$$10^{-6}$ of a weight decay with a polynomial learning rate schedule. Following the previous works \citep{invpt, ye2022taskprompter}, the cross-entropy loss is used for semantic segmentation, human parts estimation, and saliency, edge detection. Surface normal prediction and depth estimation use L1 loss.

\subsection{Design and Implementation Strategy}
To improve efficiency, we perform SVD only once early in training to estimate the feature space for conflict analysis. Gradient conflicts are measured in a pairwise manner across tasks, and the average number of conflicts in each space is used to guide token expansion. Based on this, we statically allocate a small number of task-specific tokens (six in our setup) as learnable parameters, independently applied at each layer. These tokens are fixed during training and do not adapt dynamically. For NYUD-v2 and PASCAL-Context, we use the full training sets to compute gradient statistics, while for Taskonomy, covariance is estimated using 100 randomly sampled mini-batches. The assignment of Token Modulation (TM) and Token Expansion (TE) is determined by a manually chosen activation ratio, which we analyze in \cref{fig:sup:prop}. Rather than activating all components uniformly, TM and TE are selectively applied to layers with the highest conflict levels, either individually or jointly, based on their effectiveness in reducing task interference.

\subsection{Evaluation}
For semantic segmentation, we utilize mean Intersection over Union (mIoU). Surface normal prediction performance is measured by the mean angular distance between the predicted output and ground truth. Depth estimation is evaluated using Root Mean Squared Error (RMSE). For saliency estimation and human part segmentation, we employ mIoU. Edge detection is assessed using the optimal-dataset-scale F-measure (odsF). For Taskonomy, we adopt RMSE for principal curvature and L1 distance for the remaining tasks.

\begin{table*}[t!]
    \centering
    \vspace{-5pt}
    \caption{Comparison with multi-task optimization approaches on Taskonomy across 11 different tasks with ViT-L. Non-converged results are indicated with a dash.}
    \resizebox{\textwidth}{!}{
    \begin{tabular}{l|ccccccccccc|c}
    \toprule
        Task       & DE     & DZ      & EO     & ET     & Key2D    & Key3D   & N      & PC     & R      & S2D    & S25D   &    \\
        Metric    & L1 Dist. $\downarrow$  & L1 Dist. $\downarrow$ & L1 Dist. $\downarrow$ & L1 Dist. $\downarrow$ & L1 Dist. $\downarrow$ & L1 Dist. $\downarrow$ & L1 Dist. & RMSE $\downarrow$    & L1 Dist. $\downarrow$ & L1 Dist. $\downarrow$ & L1 Dist. $\downarrow$  & $\triangle_m$ $\uparrow$ $(\%)$ \\ \midrule 
        ST    & 0.0141&0.0146&0.0992&0.1716&0.1631&0.0801&0.2133&0.7134&0.1342&0.1688&0.1419& 0.00 \\ \midrule
        GD        & 0.0153&0.0156&0.1196&0.1757&0.1729&0.0896&0.2215&0.7451&0.1576&0.1826&0.1537&-8.92 \\
        GradDrop  & 0.0170&0.0195&0.1235&0.1757&0.1753&0.0909&0.2818&0.7679&0.1663&0.1916&0.1543&-17.07  \\
        MGDA      & - & - & - & - & - & - & - & - & - & - & - & - \\
        UW        & 0.0152&0.0155&0.1195&0.1755&0.1728&0.0897&0.2356&0.7436&0.1569&0.1830&0.1538&-9.36 \\
        DWA       & 0.0153&0.0156&0.1197&0.1757&0.1730&0.0897&0.2214&0.7441&0.1576&0.1827&0.1537&-8.96 \\
        PCGrad    & 0.0152&0.0156&0.1192&0.1749&0.1699&0.0893&0.2310&0.7475&0.1577&0.1825&0.1480&-8.63 \\  
        CAGrad    & 0.0155&0.0156&0.1175&0.1756&0.1649&0.0860&0.2421&0.7544&0.1591&0.1854&0.1554&-9.32 \\
        IMTL      & 0.0151&0.0156&0.1194&0.1755&0.1726&0.0895&0.2199&0.7432&0.1569&0.1824&0.1533&-8.57 \\
        Align-MTL & 0.0150&0.0155&0.1136&0.1733&0.1633&0.0862&0.2512&0.8029&0.1643&0.1803&0.1445&-8.78 \\
        Nash-MTL  & 0.0151&0.0154&0.1138&0.1732&0.1644&0.0863&0.2507&0.7656&0.1544&0.1833&0.1452&-7.95 \\
        FAMO      & 0.0153&0.0157&0.1196&0.1757&0.1730&0.0897&0.2221&0.7444&0.1575&0.1830&0.1534&-8.99 \\
        DTME-MTL  & 0.0127&0.0130&0.1088&0.1731&0.1665&0.0852&0.1654&0.6890&0.1389&0.1661&0.1404&+2.41 \\   
        \bottomrule 
    \end{tabular}}
	\label{tab:opt_taskonomy_vitL}
    \vspace{-5pt}
\end{table*}
\begin{table*}[t!]
    \centering
    \caption{Comparison with multi-task optimization approaches on Taskonomy across 11 different tasks with ViT-S. Non-converged results are indicated with a dash.}
    \vspace{-5pt}
    \resizebox{\textwidth}{!}{
    \begin{tabular}{l|ccccccccccc|c}
    \toprule
        Task       & DE     & DZ      & EO     & ET     & Key2D    & Key3D   & N      & PC     & R      & S2D    & S25D   &    \\
        Metric    & L1 Dist. $\downarrow$  & L1 Dist. $\downarrow$ & L1 Dist. $\downarrow$ & L1 Dist. $\downarrow$ & L1 Dist. $\downarrow$ & L1 Dist. $\downarrow$ & L1 Dist. & RMSE $\downarrow$    & L1 Dist. $\downarrow$ & L1 Dist. $\downarrow$ & L1 Dist. $\downarrow$  & $\triangle_m$ $\uparrow$ $(\%)$ \\ \midrule 
        ST        0.0255&0.0255&0.1285&0.1727&0.1653&0.0918&0.3973&0.8562&0.1864&0.1824&0.1647& 0.00 \\ \midrule
        GD        &0.0244&0.0243&0.1501&0.1778&0.1844&0.1009&0.4105&0.9087&0.2325&0.2032&0.1822&-8.04 \\
        GradDrop  &0.0253&0.0253&0.1533&0.1785&0.1865&0.1021&0.4399&0.9246&0.2408&0.2063&0.1791&-10.42 \\
        MGDA      & - & - & - & - & - & - & - & - & - & - & - & - \\
        UW        &0.0242&0.0242&0.1498&0.1778&0.1847&0.1007&0.4064&0.9079&0.2312&0.2033&0.1822&-7.74 \\
        DWA       &0.0242&0.0242&0.1500&0.1778&0.1844&0.1008&0.4097&0.9071&0.2316&0.2032&0.1822&-7.84 \\
        PCGrad    &0.0248&0.0248&0.1501&0.1755&0.1761&0.1001&0.4306&0.9181&0.2371&0.2023&0.1772&-8.12 \\  
        CAGrad    &0.0254&0.0255&0.1516&0.1738&0.1698&0.0983&0.4535&0.9282&0.2442&0.2068&0.1849&-9.74 \\
        IMTL      &0.0236&0.0237&0.1456&0.1756&0.1760&0.0988&0.4151&0.9055&0.2222&0.2010&0.1794&-5.74 \\
        Align-MTL &0.0266&0.0264&0.1499&0.1736&0.1700&0.0986&0.4659&0.9868&0.2604&0.2030&0.1780&-11.51 \\
        Nash-MTL  &0.0235&0.0235&0.1432&0.1745&0.1718&0.0975&0.4230&0.9225&0.2268&0.1985&0.1775&-5.41 \\
        FAMO      &0.0243&0.0243&0.1499&0.1778&0.1846&0.1008&0.3841&0.9080&0.2321&0.2027&0.1816&-7.31 \\
        DTME-MTL  &0.0196&0.0200&0.1372&0.1754&0.1712&0.0958&0.3129&0.8333&0.1955&0.1907&0.1698&+3.62 \\   
        \bottomrule 
    \end{tabular}}
	\label{tab:opt_taskonomy_vitS}
 \vspace{-5pt}
\end{table*}
\begin{table*}[t!]
    \centering
    \caption{Comparison with multi-task optimization approaches on Taskonomy across 11 different tasks with ViT-T. Non-converged results are indicated with a dash.}
    \vspace{-5pt}
    \resizebox{\textwidth}{!}{
    \begin{tabular}{l|ccccccccccc|c}
    \toprule
        Task       & DE     & DZ      & EO     & ET     & Key2D    & Key3D   & N      & PC     & R      & S2D    & S25D   &    \\
        Metric    & L1 Dist. $\downarrow$  & L1 Dist. $\downarrow$ & L1 Dist. $\downarrow$ & L1 Dist. $\downarrow$ & L1 Dist. $\downarrow$ & L1 Dist. $\downarrow$ & L1 Dist. & RMSE $\downarrow$    & L1 Dist. $\downarrow$ & L1 Dist. $\downarrow$ & L1 Dist. $\downarrow$  & $\triangle_m$ $\uparrow$ $(\%)$ \\ \midrule 
        ST        &0.0250&0.0256&0.1388&0.1755&0.1670&0.0958&0.3856&0.9066&0.2132&0.1878&0.1722& 0.00 \\ \midrule
        GD        &0.0266&0.0278&0.1593&0.1794&0.1865&0.1047&0.4752&0.9467&0.2568&0.2081&0.1897&-11.10 \\
        GradDrop  &0.0276&0.0284&0.1624&0.1807&0.1884&0.1064&0.4741&0.9611&0.2658&0.2108&0.1860&-12.67 \\
        MGDA      & - & - & - & - & - & - & - & - & - & - & - & - \\
        UW        &0.0266&0.0277&0.1593&0.1795&0.1865&0.1045&0.4757&0.9466&0.2567&0.2080&0.1896&-11.07 \\
        DWA       &0.0266&0.0274&0.1593&0.1794&0.1866&0.1045&0.4743&0.9465&0.2567&0.2080&0.1897&-10.95 \\
        PCGrad    &0.0273&0.0285&0.1596&0.1768&0.1807&0.1043&0.4785&0.9689&0.2644&0.2080&0.1854&-11.55 \\  
        CAGrad    &0.0290&0.0305&0.1641&0.1747&0.1731&0.1051&0.4884&0.9870&0.2828&0.2136&0.1945&-14.64 \\
        IMTL      &0.0263&0.0272&0.1558&0.1772&0.1810&0.1025&0.4730&0.9525&0.2458&0.2065&0.1868&-9.24 \\
        Align-MTL & - & - & - & - & - & - & - & - & - & - & - & - \\
        Nash-MTL  &0.0261&0.0270&0.1536&0.1762&0.1766&0.1017&0.4590&0.9649&0.2496&0.2039&0.1846&-8.28 \\
        FAMO      &0.0266&0.0275&0.1592&0.1795&0.1865&0.1047&0.4746&0.9466&0.2566&0.2080&0.1898&-10.97 \\
        DTME-MTL  &0.0236&0.0241&0.1494&0.1765&0.1790&0.0998&0.4138&0.8921&0.2290&0.1959&0.1824&-2.88 \\   
        \bottomrule 
    \end{tabular}}
	\label{tab:opt_taskonomy_vitT}
\end{table*}
\begin{table*}[t!]
\caption{We compare task performance based on the configuration of the modulator. Specifically, we compare the performance of tasks using an affine transformation against those using a batch normalization layer as configurations for the modulator.}
\vspace{-5pt}
\centering
\renewcommand\arraystretch{1.10}
\tiny
\resizebox{\textwidth}{!}{
\begin{tabular}{c|cccc|ccccc}
\hline
\multirow{3}{*}{}         & \multicolumn{4}{c|}{NYUD-v2}      & \multicolumn{5}{c}{PASCAL-Context}   \\ \cline{2-10}
                                & \multicolumn{1}{c|}{Semseg} & \multicolumn{1}{c|}{Depth}  & \multicolumn{1}{c|}{Normal} & Edge  & \multicolumn{1}{c|}{Semseg} & \multicolumn{1}{c|}{Parsing} & \multicolumn{1}{c|}{Saliency} & \multicolumn{1}{c|}{Normal} & Edge  \\
\multicolumn{1}{c|}{\multirow{-3}{*}{Model}}                             & \multicolumn{1}{c|}{mIoU $\uparrow$}   & \multicolumn{1}{c|}{RMSE $\downarrow$}   & \multicolumn{1}{c|}{mErr $\downarrow$}   & odsF $\uparrow$ & \multicolumn{1}{c|}{mIoU $\uparrow$}   & \multicolumn{1}{c|}{mIoU $\uparrow$}    & \multicolumn{1}{c|}{maxF $\uparrow$}     & \multicolumn{1}{c|}{mErr $\downarrow$}   & odsF $\uparrow$ \\ \hline
\multicolumn{1}{l|}{TM+TE (Affine)}     & \multicolumn{1}{c|}{\textbf{38.27}}  & \multicolumn{1}{c|}{\textbf{0.6370}} & \multicolumn{1}{c|}{\textbf{21.64}}  & \textbf{57.90} & \multicolumn{1}{c|}{\textbf{66.18}}  & \multicolumn{1}{c|}{\textbf{56.29}}   & \multicolumn{1}{c|}{\textbf{83.21}}    & \multicolumn{1}{c|}{\textbf{15.26}}  & \textbf{47.00} \\ \hline
\multicolumn{1}{l|}{TM+TE (Batch Norm)} & \multicolumn{1}{c|}{37.42}  & \multicolumn{1}{c|}{0.6550} & \multicolumn{1}{c|}{23.16}  & 56.10 & \multicolumn{1}{c|}{60.80}  & \multicolumn{1}{c|}{53.29}   & \multicolumn{1}{c|}{82.59}    & \multicolumn{1}{c|}{15.73}  & 44.90 \\ \hline
\end{tabular}}
\label{tab:sup:ablation_mod_config}
\end{table*}

\section{Additional Experiments}
\label{Append:additional_experimemts}
\textbf{Comparison with Multi-Task Optimization.} In \cref{tab:opt_taskonomy_vitL,tab:opt_taskonomy_vitS,tab:opt_taskonomy_vitT}, we further evaluate the proposed DTME-MTL against previous multi-task optimization approaches using different backbone sizes. Our method demonstrates significant improvements in multi-task performance with minimal increases in parameters. Specifically, DTME-MTL results in a parameter increase of $0.089\%$ for ViT-L, $0.23\%$ for ViT-S, and $0.46\%$ for ViT-T.

\begin{table*}[t!]
\caption{We assess task performance by comparing scenarios where we freeze the backbone network after expansion (w/ Freeze) and where we don't (w/o Freeze).}
\vspace{-5pt}
\centering
\renewcommand\arraystretch{1.00}
\tiny
\resizebox{\textwidth}{!}{
\begin{tabular}{c|cccc|ccccc}
\hline
\multirow{3}{*}{}         & \multicolumn{4}{c|}{NYUD-v2}         & \multicolumn{5}{c}{PASCAL-Context}     \\ \cline{2-10}
                                & \multicolumn{1}{c|}{Semseg} & \multicolumn{1}{c|}{Depth}  & \multicolumn{1}{c|}{Normal} & Edge  & \multicolumn{1}{c|}{Semseg} & \multicolumn{1}{c|}{Parsing} & \multicolumn{1}{c|}{Saliency} & \multicolumn{1}{c|}{Normal} & Edge  \\
\multicolumn{1}{c|}{\multirow{-3}{*}{Model}}                             & \multicolumn{1}{c|}{mIoU $\uparrow$}   & \multicolumn{1}{c|}{RMSE $\downarrow$}   & \multicolumn{1}{c|}{mErr $\downarrow$}   & odsF $\uparrow$ & \multicolumn{1}{c|}{mIoU $\uparrow$}   & \multicolumn{1}{c|}{mIoU $\uparrow$}    & \multicolumn{1}{c|}{maxF $\uparrow$}     & \multicolumn{1}{c|}{mErr $\downarrow$}   & odsF $\uparrow$ \\ \hline
\multicolumn{1}{l|}{TM+TE (w/ Freeze)} & \multicolumn{1}{c|}{34.80}  & \multicolumn{1}{c|}{0.6730} & \multicolumn{1}{c|}{22.48}  & 56.00 & \multicolumn{1}{c|}{58.34}  & \multicolumn{1}{c|}{52.96}   & \multicolumn{1}{c|}{82.86}    & \multicolumn{1}{c|}{15.63}  & 43.20 \\ \hline
\multicolumn{1}{l|}{TM+TE (w/o Freeze)}     & \multicolumn{1}{c|}{\textbf{38.27}}  & \multicolumn{1}{c|}{\textbf{0.6370}} & \multicolumn{1}{c|}{\textbf{21.64}}  & \textbf{57.90} & \multicolumn{1}{c|}{\textbf{66.18}}  & \multicolumn{1}{c|}{\textbf{56.29}}   & \multicolumn{1}{c|}{\textbf{83.21}}    & \multicolumn{1}{c|}{\textbf{15.26}}  & \textbf{47.00} \\ \hline

\end{tabular}}
\label{tab:sup:ablation_backbone_freeze}
\vspace{-10pt}
\end{table*}

\begin{figure*}[t!]
\centering
    \begin{subfigure}{\linewidth}
    \includegraphics[width=0.99\columnwidth]{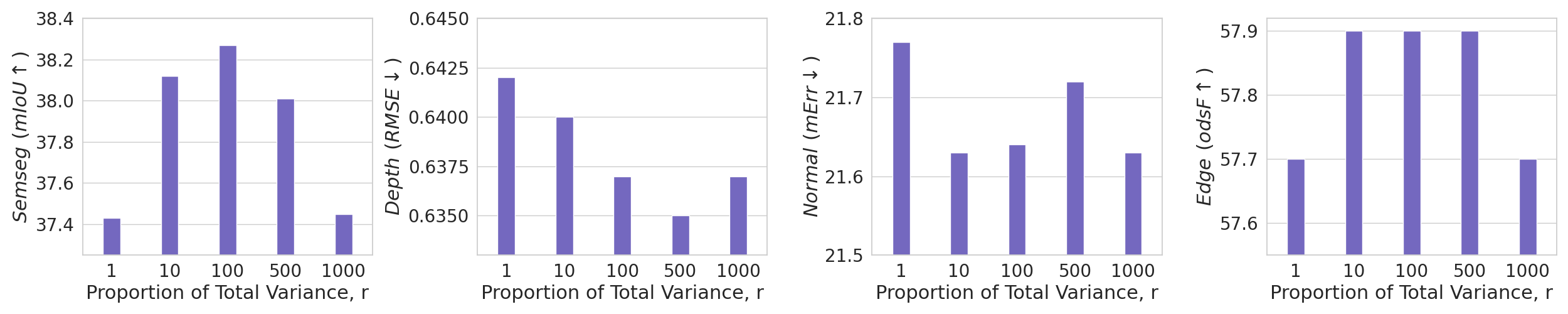}
    \caption{Results on NYUD-v2.}
    \label{fig:sup:svd_nyud}
    \end{subfigure}
    
    \begin{subfigure}{\linewidth}
    \includegraphics[width=0.99\columnwidth]{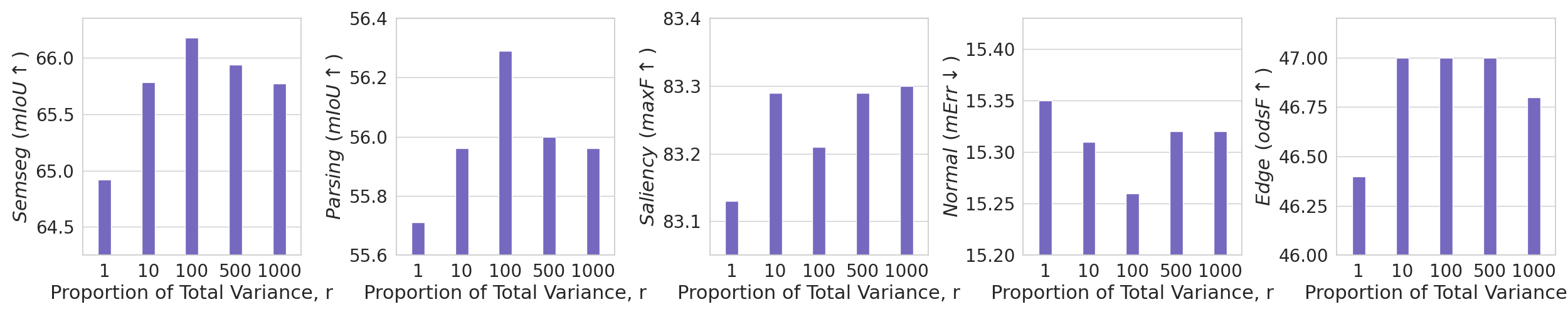}
    \caption{Results on PASCAL-Context.}
    \label{fig:sup:svd_pascal}
    \end{subfigure}
\vspace{-18pt}
\caption{We assess the performance of tasks based on the proportion of total variance $r$. The results are displayed for both (a) NYUD-v2 and (b) PASCAL-Context.}
\label{fig:sup:influence_r}
\vspace{-8pt}
\end{figure*}

\begin{figure*}[t!]
\vspace{-8pt}
\centering
    \begin{subfigure}{\linewidth}
    \includegraphics[width=0.99\columnwidth]{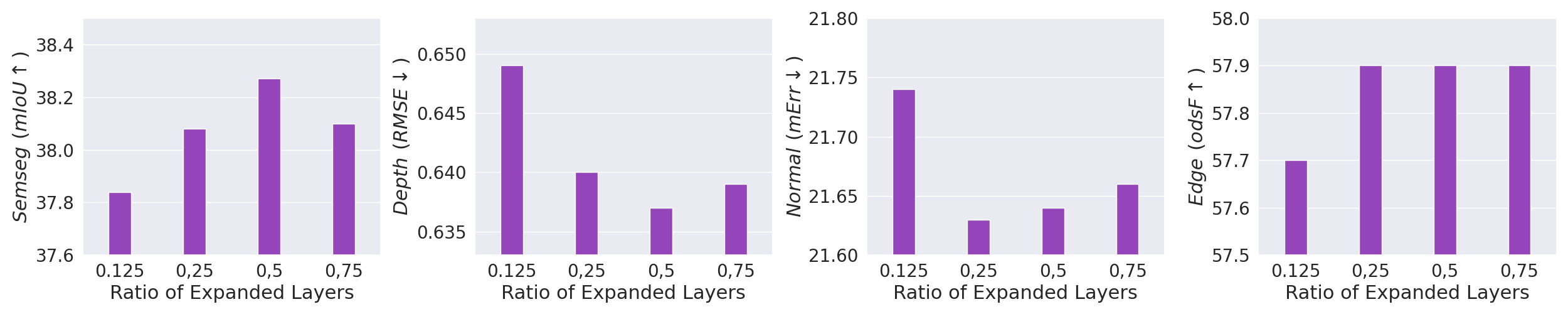}
    \caption{Results on NYUD-v2.}
    \label{fig:sup:prop_nyud}
    \end{subfigure}
    \begin{subfigure}{\linewidth}
    \includegraphics[width=0.99\columnwidth]{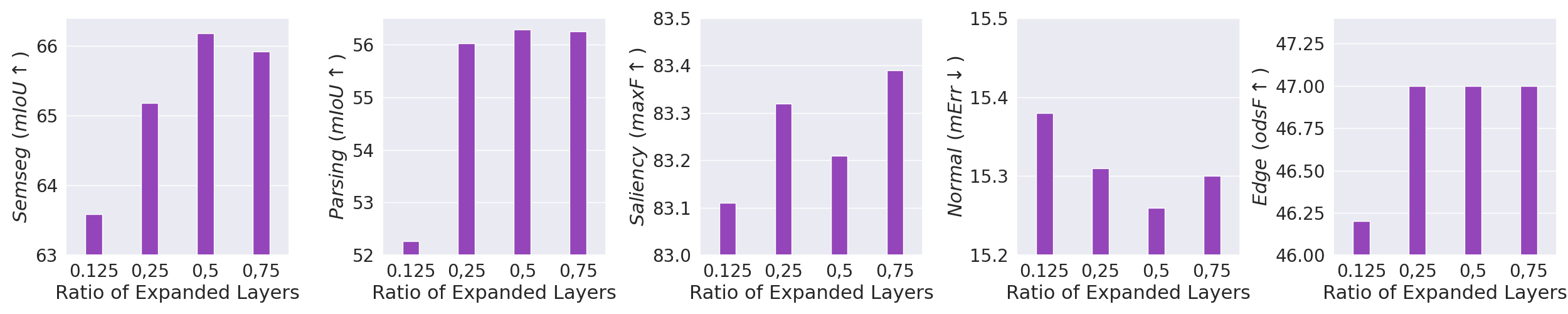}
    \caption{Results on PASCAL-Context.}
    \label{fig:sup:prop_pascal}
    \vspace{-8pt}
    \end{subfigure}
\caption{The performance of tasks based on the ratio of the number of expanded layers to the total number of layers. The results are displayed for both (a) NYUD-v2 and (b) PASCAL-Context.}
\label{fig:sup:prop}
\vspace{-10pt}
\end{figure*}

\noindent\textbf{Analysis on the Modulator Configuration.} In \cref{tab:sup:ablation_mod_config}, we show the performance difference based on the configuration of the token modulators. Specifically, we compared the outcomes obtained when employing affine transformation and batch normalization, which could be considered as the most common and straightforward approaches. Through experiments, we find that affine transformations consistently exhibit better performance across all tasks compared to batch normalization layers used as modulators for both datasets.

\noindent\textbf{Analyzing Performance Differences with Backbone Network Freezing.} In \cref{tab:sup:ablation_backbone_freeze}, we examine the performance variation based on whether we freeze the existing backbone network components when training the expanded network after implementing the proposed dynamic token modulation and expansion. The results indicate that training networks without freezing the existing backbone network components leads to significantly better performance compared to training networks with freezing. We guess that allowing modifications to the learned token space after expansion helps the network to dynamically partition the token space for each task.

\noindent\textbf{Influence of $\textbf{r}$ on SVD Approximation.} In \cref{fig:sup:influence_r}, we illustrate how the proportion of total variance $r$ impacts the approximation of a token's range and null space. We assess the performance of tasks across five values of $r$ (1, 10, 100, 500, 1000). Our results suggest that the value of $r$ has minimal impact on task performance, implying that there is less need for extensive tuning of the $r$ parameter to optimize performance. In our other experiments, we chose $r$ as 100 for training.

\noindent\textbf{Impact of the Number of Layers Expanded by DTME-MTL.}  
DTME-MTL expands a subset of transformer layers selected based on the severity of gradient conflicts. In \cref{fig:sup:prop}, we analyze how varying the number of expanded layers affects task performance. The x-axis denotes the ratio of expanded layers to the total number of layers. We observe that applying TM+TE to approximately 25\%–50\% of the layers yields consistent performance gains across tasks while maintaining parameter efficiency. Performance improves as more high-conflict layers are expanded, but begins to degrade when expansion exceeds 50\%, especially when low-conflict layers are included. This suggests that over-expansion can be detrimental. \Cref{tab:ablation_random} further confirms that using a moderate expansion ratio (50\%) avoids overfitting, whereas \cref{fig:sup:prop} highlights that indiscriminate expansion into less conflicting layers harms performance. These findings underscore the importance of both the \textit{extent} and \textit{location} of TM+TE application.

\noindent\textbf{Effect of Swapping Conflict Types.} In \cref{tab:sel}, we present the results of an experiment on NYUD-v2 where we intentionally swap the conflict types targeted by each method. Specifically, Token Expansion (TE) is applied to layers with severe range space conflict, and Token Modulation (TM) is applied to layers with severe null space conflict—opposite to our standard configuration. This reversal leads to a clear performance drop, confirming that each method is most effective when applied to the type of conflict it is designed to resolve. These results support our design choice of assigning TM to range space conflict and TE to null space conflict.

\begin{table}[h]
\centering
\small
\vspace{-4pt}
\caption{Performance comparison across selection strategies.}
\vspace{-8pt}
\begin{tabular}{c|cccc} \hline
Method                              & Random & Reverse & Swap   & Standard \\\hline
$\triangle_m$ $\uparrow$ $(\%)$     & -2.966 & -6.167  & -2.608 & +0.044  \\\hline
\end{tabular}
\label{tab:sel}
\vspace{-4pt}
\end{table}

\section{Additional Analysis}
\label{sup:additional_analysis}
\noindent\textbf{Further Justification for Targeted TM/TE Assignment.}  
Prior work \cite{neyshabur2020being} suggests that fine-tuning from a pretrained model tends to remain in the same loss basin, preserving the structure of the pretrained feature space. Accordingly, we view the token space during fine-tuning as constrained by the span of the pretrained features. If the conflict lies within this span (i.e., the range space), it can be resolved by rotating the token space—achievable via a modulator, since affine transformations include rotation. However, if the conflict resides in the null space, it lies outside the span and cannot be sufficiently addressed by modulation alone. In such cases, expanding the token space with task-specific tokens helps relax this constraint.   We theoretically support this view in Propositions~\ref{theorem1} and~\ref{theorem2} (with proofs in \cref{Append:proof}), which analyze how each method addresses conflict in its respective subspace. This is further validated empirically: we measure the reduction in gradient conflicts by comparing the start and end of training in each space (\cref{tab:conflict}). The results show that Token Modulation (TM) is more effective in reducing conflicts in the range space, while Token Expansion (TE) is more effective in the null space. This consistency between theoretical analysis and empirical behavior supports our design choice to selectively apply TM and TE based on the dominant type of conflict in each layer.

\noindent\textbf{Token-Level vs. Parameter-Level Conflict Handling.}  
Parameter-space conflicts reflect an aggregate gradient across all tokens, which makes it difficult to localize or disentangle the source of interference. In contrast, token-level conflicts can be measured for each individual token, allowing more localized and fine-grained analysis. This granularity enables our method to selectively modulate or expand tokens based on where the conflict occurs. Furthermore, by decomposing the token space into range and null components—depending on whether the pretrained model already spans those directions—we adaptively apply Token Modulation (TM) or Token Expansion (TE) to address conflicts. Such space-aware conflict resolution is fundamentally infeasible in parameter space, where task interference is entangled across layers and tokens.

\noindent\textbf{Comparison with LoRA in Multi-Task Inference.}  
As shown in \cref{tab:tab_ablation}, the baseline (ST) corresponds to full fine-tuning and serves as an upper bound on performance. While LoRA \cite{hu2022lora} is a parameter-efficient method, assigning a separate LoRA module for each task leads to disjoint sets of task-specific weights. Even when merged into the base model, these configurations require separate forward passes per task, negating the efficiency benefits of multi-task learning (MTL). In contrast, our method maintains shared weights across tasks and allows all outputs to be computed jointly in a single batched tensor operation on GPU. This enables highly parallel inference with only a 13.4\% overhead per task, whereas the inference time in LoRA scales linearly with the number of tasks.

\begin{table}[h]
\vspace{-5pt}
\centering
\small
\caption{Reduction in gradient conflict numbers (NYUD-v2).}
\vspace{-8pt}
\begin{tabular}{c|cccc} \hline
Method  & Num($g_{\mathcal{R},i} \cdot g_{\mathcal{R},i} \leq 0$)  & Num($g_{\mathcal{N},i} \cdot g_{\mathcal{N},i} \leq 0$) \\ \hline
TM      &11.60 $\% \downarrow$  &8.92 $\% \downarrow$  \\\hline
TE      &4.64$\% \downarrow$  &15.44 $\% \downarrow$  \\\hline
\end{tabular}
\label{tab:conflict}
\vspace{-2pt}
\end{table}

\onecolumn
\section{Theoretical Analysis}
\label{Append:proof}
\subsection{Proof of \cref{theorem1}}

\theomone*
\begin{proof}
Let the loss function $\mathcal{L}_i$ be a function of the shared parameters $\Theta_s$, the token modulator $\mathcal{M}_i$, and the input data $\mathcal{X}_l$. Since transformers convert input data into tokens, we consider the loss to be a function of one of the input tokens, $\mathcal{T}_{in}$, rather than $\mathcal{X}^l$. To represent the updating step during optimization, we use the superscript $t$ for current variables, such as $\Theta_s^t$, and $\mathcal{M}_i^t$, and $t+1$ for the next-step variables, such as $\Theta_s^{t+1}$, and $\mathcal{M}_i^{t+1}$.

In cases where the input token $\mathcal{T}_{in}$ spans the row space of $\Tilde{\mathcal{T}_s}$, this can be expressed as follows:
\begin{align}
\mathcal{U}_\mathcal{N} \mathcal{U}_\mathcal{N}^T \nabla_{\mathcal{T}_{in}} \mathcal{L}_i (\Theta_s^{t}, \mathcal{M}_i^{t}, \mathcal{T}_{in}) \simeq 0
\label{eq:null_space}
\end{align}

Since the row space and null space are perpendicular to each other, with their dimensions summing to the entire space, the following holds according to \cref{eq:null_space}:
\begin{align}
\sum_{i=1}^{\mathcal{K}} \nabla_{\mathcal{T}_{in}} \mathcal{L}_i = \sum_{i=1}^{\mathcal{K}}  (\mathcal{U}_\mathcal{R} \mathcal{U}_\mathcal{R}^T + \mathcal{U}_\mathcal{N} \mathcal{U}_\mathcal{N}^T) \nabla_{\mathcal{T}_{in}} \mathcal{L}_i
\simeq \sum_{i=1}^{\mathcal{K}}  (\mathcal{U}_\mathcal{R} \mathcal{U}_\mathcal{R}^T) \nabla_{\mathcal{T}_{in}} \mathcal{L}_i
\end{align}

Let the token modulator $\mathcal{M}_i$ be a $p \times p$ matrix that manipulates the input token $\mathcal{T}_{in}$.
\begin{align}
\sum_{i=1}^{\mathcal{K}} \nabla_{\mathcal{T}_{in}} \mathcal{L}_i
    &=\sum_{i=1}^{\mathcal{K}} 
    (\mathcal{U}_\mathcal{R} \mathcal{M}_i^t) (\mathcal{U}_\mathcal{R} \mathcal{M}_i^t)^T \cdot
    \nabla_{\mathcal{M}_i^t} \mathcal{L}_i \cdot \nabla_{\mathcal{T}_{in}} \mathcal{M}_i^t
    \label{eq:exp_opt}
\end{align}

The total multi-task loss can be represented using a Taylor expansion. Assuming $\eta \ll 1$, we can ignore the second-order terms of $\eta$:
\begin{align}
    \sum_{i=1}^{\mathcal{K}} \mathcal{L}_i (\Theta_s^{t+1}, \mathcal{M}_i^{t+1}, \mathcal{T}_s) = 
    \sum_{i=1}^{\mathcal{K}} \mathcal{L}_i (\Theta_s^{t}, \mathcal{M}_i^{t}, \mathcal{T}_s) + \sum_{i=1}^{\mathcal{K}} \nabla_{\Theta_s^t} \mathcal{L}_i (\Theta_s^t, \mathcal{M}_i^t, \mathcal{T}_s)(\Theta_s^{t+1}-\Theta_s^t) \\
    + \sum_{i=1}^{\mathcal{K}} \nabla_{\mathcal{M}_i^t} \mathcal{L}_i (\Theta_s^t, \mathcal{M}_i^t, \mathcal{T}_s)(\mathcal{M}_i^{t+1}-\mathcal{M}_i^t) \\
    = \sum_{i=1}^{\mathcal{K}} \mathcal{L}_i (\Theta_s^{t}, \mathcal{M}_i^{t}, \mathcal{T}_s) - \eta | \sum_{i=1}^{\mathcal{K}} \nabla_{\Theta_s} \mathcal{L}_i (\Theta_s^t, \mathcal{M}_i^t, \mathcal{T}_s)|^2 \\
    - \eta \sum_{i=1}^{\mathcal{K}} |\nabla_{\mathcal{M}_i^t} \mathcal{L}_i (\Theta_s^t, \mathcal{M}_i^t, \mathcal{T}_s)|^2
    \label{eq:mod_mtloss}
\end{align}

By optimizing the modulator $\mathcal{M}_i^t$ so that $|\nabla_{\mathcal{M}_i^t} \mathcal{L}_i (\Theta_s^t, \mathcal{M}_i^t, \mathcal{T}_{in})|$ approaches zero for each task $i = 1, 2, \dots, \mathcal{K}$, we can alleviate gradient conflicts in the row space of $\Tilde{\mathcal{T}}_s$ (as \cref{eq:exp_opt} also approaches zero) and reduce the overall multi-task loss, since \cref{eq:mod_mtloss} is always greater than or equal to zero.
\end{proof}

\subsection{Proof of \cref{theorem2}}
\theomtwo*
\begin{proof}
Let the loss function $\mathcal{L}_i$ be a function of the shared parameters $\Theta_s^t$, the task-specific token $\mathcal{T}_i^t$, and the input data $\mathcal{X}^t$. Similarly, since transformers convert input data into tokens, we consider the loss to be a function of one of the input tokens, $\mathcal{T}_{in}^t$, rather than $\mathcal{X}^t$. To represent the updating step during optimization, we use the superscript $t$ for current variables, such as $\Theta_s^t$, $\mathcal{T}_{in}^t$ and $\mathcal{M}_i^t$, and $t+1$ for the next-step variables, such as $\Theta_s^{t+1}$, $\mathcal{T}_{in}^{t+1}$ and $\mathcal{M}_i^{t+1}$. 

In the case where the input token $\mathcal{T}_{in}^t$ spans the null space of $\Tilde{\mathcal{T}_s}$, this can be expressed as follows:
\begin{align}
\sum_{i=1}^{\mathcal{K}} \mathcal{U}_\mathcal{R} \mathcal{U}_\mathcal{R}^T \nabla_{\mathcal{T}_{in}^t} \mathcal{L}_i (\Theta_s^{t}, \mathcal{T}_{in}^t, \mathcal{T}_i^{t}) \simeq 0
\label{eq:row_space}
\end{align}

The derivative of the task-specific loss $\mathcal{L}_i$ with respect to the expanded token, including the input token $\mathcal{T}_{in}^t$ and the learnable task-specific tokens $\mathcal{T}_i^t$, is given as follows:
\begin{align}
\sum_{i=1}^{\mathcal{K}} & \nabla_{[\mathcal{T}_{in}^t,\mathcal{T}_i^t]} \mathcal{L}_i \\
    &=\sum_{i=1}^{\mathcal{K}} 
    \Bigg(
    \begin{bmatrix}
    \mathcal{U}_\mathcal{R} & 0_{d\times\mathcal{K}} \\
    0_{\mathcal{K}\times d} &  \mathcal{U}_{\mathcal{R},i}  \\
   \end{bmatrix}
   \begin{bmatrix}
    \mathcal{U}_\mathcal{R} & 0_{d\times\mathcal{K}} \\
    0_{\mathcal{K}\times d} &  \mathcal{U}_{\mathcal{R},i}  \\
   \end{bmatrix}^{T}
   +
   \begin{bmatrix}
   \mathcal{U}_\mathcal{N} & 0_{d\times\mathcal{K}} \\
    0_{\mathcal{K}\times d} & 0_{\mathcal{K}\times\mathcal{K}} \\
   \end{bmatrix}
   \begin{bmatrix}
    \mathcal{U}_\mathcal{N} & 0_{d\times\mathcal{K}} \\
    0_{\mathcal{K}\times d} & 0_{\mathcal{K}\times\mathcal{K}} \\
   \end{bmatrix}^{T}
   \Bigg)
   \begin{bmatrix}
    \nabla_{\mathcal{T}_{in}^t} \mathcal{L}_i \\
    \nabla_{\mathcal{T}_i^t} \mathcal{L}_i  \\
   \end{bmatrix}\\
   &=\sum_{i=1}^{\mathcal{K}} 
   \begin{bmatrix}
    \mathcal{U}_\mathcal{R} \mathcal{U}_\mathcal{R}^T + \mathcal{U}_\mathcal{N} \mathcal{U}_\mathcal{N}^T  & 0_{d\times\mathcal{K}} \\
    0_{\mathcal{K}\times d} &  \mathcal{U}_{\mathcal{R},i} \mathcal{U}_{\mathcal{R},i}^T  \\
   \end{bmatrix}
   \begin{bmatrix}
    \nabla_{\mathcal{T}_{in}^t} \mathcal{L}_i \\
    \nabla_{\mathcal{T}_i^t} \mathcal{L}_i  \\
   \end{bmatrix} \\
   &\simeq \sum_{i=1}^{\mathcal{K}} 
   \begin{bmatrix}
    \mathcal{U}_\mathcal{N} \mathcal{U}_\mathcal{N}^T  & 0_{d\times\mathcal{K}} \\
    0_{\mathcal{K}\times d} &  \mathcal{U}_{\mathcal{R},i} \mathcal{U}_{\mathcal{R},i}^T  \\
   \end{bmatrix}
   \begin{bmatrix}
    \nabla_{\mathcal{T}_{in}^t} \mathcal{L}_i \\
    \nabla_{\mathcal{T}_i^t} \mathcal{L}_i  \\
    \end{bmatrix} \\
    &= \sum_{i=1}^{\mathcal{K}} 
   \begin{bmatrix}
    (\mathcal{U}_\mathcal{N} \mathcal{U}_\mathcal{N}^T) \nabla_{\mathcal{T}_{in}^t} \mathcal{L}_i \\
    (\mathcal{U}_{\mathcal{R},i} \mathcal{U}_{\mathcal{R},i}^T) \nabla_{\mathcal{T}_i^t} \mathcal{L}_i  \\
    \end{bmatrix}
\end{align}

The total multi-task loss can be expressed as follows:
\begin{align}
    \mathcal{L}_i (\Theta_s^{t+1}, \mathcal{T}_{in}^{t+1}, \mathcal{T}_i^{t+1}) = 
    \mathcal{L}_i (\Theta_{in}^{t}, \mathcal{T}_s^{t}, \mathcal{T}_i^{t}) 
    + \nabla_{\Theta_s^t} \mathcal{L}_i (\Theta_s^{t}, \mathcal{T}_s^{t}, \mathcal{T}_i^{t})(\Theta_s^{t+1}-\Theta_s^t) \\
    + \nabla_{\mathcal{T}_{in}^t} \mathcal{L}_i (\Theta_s^{t}, \mathcal{T}_s^{t}, \mathcal{T}_i^{t})(\mathcal{T}_{in}^{t+1}-\mathcal{T}_{in}^{t}) \\
    + \nabla_{\mathcal{T}_i^t} \mathcal{L}_i (\Theta_s^{t}, \mathcal{T}_s^{t}, \mathcal{T}_i^{t})(\mathcal{T}_i^{t+1}-\mathcal{T}_i^t) \\
    = \mathcal{L}_i (\Theta_s^{t}, \mathcal{T}_{in}^{t}, \mathcal{T}_i^{t}) - \eta \nabla_{\Theta_s^t} \mathcal{L}_i (\Theta_s^{t}, \mathcal{T}_s^{t}, \mathcal{T}_i^{t}) \cdot \sum_{i=1}^{\mathcal{K}} \nabla_{\Theta_s^t} \mathcal{L}_i (\Theta_s^{t}, \mathcal{T}_{in}^{t}, \mathcal{T}_i^{t}) \\
    - \eta (\mathcal{U}_\mathcal{N} \mathcal{U}_\mathcal{N}^T) \nabla_{\mathcal{T}_{in}^{t}} \mathcal{L}_i (\Theta_s^{t}, \mathcal{T}_{in}^{t}, \mathcal{T}_i^{t}) \cdot \sum_{i=1}^{\mathcal{K}} (\mathcal{U}_\mathcal{N} \mathcal{U}_\mathcal{N}^T) \nabla_{\mathcal{T}_{in}^{t}} \mathcal{L}_i (\Theta_s^{t}, \mathcal{T}_{in}^{t}, \mathcal{T}_i^{t}) \label{eq:gc_null}\\ 
    - \eta (\mathcal{U}_{\mathcal{R},i} \mathcal{U}_{\mathcal{R},i}^T) \nabla_{\mathcal{T}_i^t} \mathcal{L}_i (\Theta_s^{t}, \mathcal{T}_{in}^{t}, \mathcal{T}_i^{t}) \cdot (\mathcal{U}_{\mathcal{R},i} \mathcal{U}_{\mathcal{R},i}^T) \nabla_{\mathcal{T}_i^t} \mathcal{L}_i (\Theta_s^{t}, \mathcal{T}_{in}^{t}, \mathcal{T}_i^{t}) \label{eq:token_expan}
\end{align}
The increase in multi-task loss caused by gradient conflicts in the null space (as described in \cref{eq:gc_null}) cannot be reduced since the shared token $\mathcal{T}_{in}^t$ is not a learnable parameter. Instead, task-specific tokens $\mathcal{T}_i^t$ can be added to mitigate the increase in multi-task loss due to null space gradient conflicts by optimizing the learnable parameters $\{\mathcal{T}_i\}_{i=1}^{\mathcal{K}}$ as described in \cref{eq:token_expan}.

\end{proof}

\end{document}